\documentclass[times,review,10pt]{elsarticle}
\usepackage{amsmath}
\usepackage{amsthm}
\newtheorem{proposition}{Proposition}

\newtheorem{remark}{Remark}
\usepackage{graphicx}
\graphicspath{{figs/}}
\usepackage{algorithm}
\usepackage{algorithmic}
\usepackage{array}
\usepackage{booktabs}
\usepackage{caption}
\usepackage{enumitem}
\usepackage{float}
\usepackage{multicol}
\usepackage{multirow}
\usepackage{nicefrac}
\usepackage{microtype}
\usepackage{tcolorbox}
\usepackage{subcaption}
\usepackage{wrapfig}
\usepackage{xcolor}
\usepackage[table]{xcolor}
\usepackage[colorlinks=true, linkcolor=blue, citecolor=blue, urlcolor=blue]{hyperref}
\usepackage{cleveref}
\usepackage{colortbl}
\definecolor{defaultgray}{gray}{0.85}

\journal{Pattern Recognition}

\begin{document}

\begin{frontmatter}



\title{Geometry-Aware Preference Optimization for Text-to-Image Diffusion Models}




\author[sysu]{\corref{cor1}{\footnotesize Lei Wang}}
\ead{likwang259@gmail.com}
\author[sysu]{{\footnotesize Zhen Wang}}
\author[alibaba]{{\footnotesize Yuexiang Xie}}
\author[alibaba]{{\footnotesize Yaliang Li}}

\cortext[cor1]{Corresponding author}

\affiliation[sysu]{organization={Sun Yat-Sen University},
            city={Guangzhou},
            country={China}}
\affiliation[alibaba]{organization={Alibaba Group},
            city={Hangzhou},
            country={China}}

\begin{abstract}
Preference alignment has become a standard practice for text-to-image diffusion models. Direct Preference Optimization (DPO) simplifies this process by eliminating explicit reward modeling. Its diffusion variant, Diffusion-DPO, has become a widely adopted baseline. Diffusion-DPO essentially encourages the likelihood of preferred samples while suppressing dispreferred ones. In this paper, we revisit DPO-style alignment methods for diffusion models from the perspective of the manifold hypothesis. Under this view, natural images concentrate near a low-dimensional manifold embedded in the high-dimensional ambient space, whereas DPO directly optimizes preference distributions in the full space without accounting for this geometric structure. This creates a mismatch in the optimization dynamics: it suppresses geometry-preserving tangential updates, while insufficiently restricting hazardous normal-direction updates. This mismatch gradually degrades image quality and diversity. To address this issue, we propose \textbf{A}nisotropic Geometry-Aware \textbf{P}reference \textbf{O}ptimization (\textbf{APO}), which replaces the uniform Euclidean treatment of prediction errors with a geometry-aware anisotropic metric derived from the reference model. Concretely, APO adaptively strengthens regularization in directions where the reference denoising function is highly sensitive, while relaxing constraints in directions that permit safe semantic adjustment. This recalibrates preference optimization according to the local manifold geometry, and maintains the original manifold structure. Experiments show that APO achieves strong performance and an average win rate exceeding \textbf{60\%} against various existing alignment methods across diverse benchmarks. It requires significantly fewer training steps than prior methods, and preserves generation diversity throughout training.
\end{abstract}



\begin{keyword}
Diffusion Models \sep Text-to-Image Generation  \sep Direct Preference Optimization


\end{keyword}

\end{frontmatter}

\section{Introduction}
Text-to-image (T2I) diffusion models have achieved remarkable progress in recent years, enabling high-fidelity and semantically controllable image synthesis across diverse domains~\cite{LD,sdxl,SD3-Medium,labs2025flux}. Inspired by the success of reinforcement learning for alignment in large language models, a growing number of post-training strategies have recently been introduced for diffusion models to improve human preference alignment, aesthetic quality, and prompt consistency~\cite{DDPO,DPOK,AlignProp,DRaFT,FlowGRPO}. Most existing approaches rely on training a reward model to learn preference signals and guide policy optimization. Diffusion-DPO~\cite{DPO}, which adapts Direct Preference Optimization~\cite{llmdpo} to diffusion models, has emerged as a practical alignment approach. It directly optimizes pairwise preferences through implicit likelihood comparisons while avoiding explicit reward model training, substantially simplifying the alignment pipeline. Following Diffusion-DPO, numerous variants have been proposed to improve aspects such as reward modeling, sampling efficiency, optimization stability, and model adaptability in diffusion alignment\cite{KTO, InPO, SDPO, LPO}.

Despite these advances, we find that DPO-based alignment of diffusion models still suffers from a previously overlooked geometric failure mode: off-manifold drift. The manifold hypothesis posits that natural image distributions concentrate near a low-dimensional manifold embedded in the ambient latent denoising space~\cite{bengio2013representation,fefferman2016testing}. In diffusion models, preference optimization is performed in the noisy latent denoising space conditioned on the text prompt, where the local geometry evolves across diffusion timesteps. However, although preference signals are ultimately defined over natural images, optimization is still performed in the full high-dimensional latent denoising space without considering the intrinsic geometry of the underlying data manifold. In particular, existing DPO-style objectives~\cite{DPO} optimize denoising prediction errors under an isotropic Euclidean metric, implicitly treating all perturbation directions in the latent space equally. This leads to two consequences:
\begin{itemize}[leftmargin=1.5em, topsep=0pt, partopsep=0pt, parsep=0pt]
    \item Updates along manifold-normal directions can easily push samples away from the natural image manifold, causing semantic distortion and quality degradation. Without explicit suppression of these geometry-sensitive directions, off-manifold drift accumulates during alignment.
    \item Updates along manifold-tangent directions are generally safer and often capture semantically meaningful variations. However, they are constrained by the same uniform regularization strength, unnecessarily limiting flexible semantic adjustments in preference optimization.
\end{itemize}
While some prior works have incorporated manifold-aware geometric constraints into generative dynamics, such considerations remain largely unexplored in post-training preference alignment for pretrained diffusion models.

To address this issue, we propose \textbf{Anisotropic Geometry-Aware Preference Optimization (APO)}. APO constructs a Jacobian-induced anisotropic metric from the local score-field geometry of the frozen reference model, adaptively suppressing updates along geometry-sensitive normal directions while allowing more flexible exploration along manifold-aligned tangent directions. To make the method computationally practical, we further derive an efficient forward-only approximation based on Hutchinson estimation and finite differences, eliminating the expensive reference-model backward pass while preserving the geometric regularization effect.

Our key observations and technical contributions are summarized as follows:
\begin{itemize}[leftmargin=1.5em, topsep=0pt, partopsep=0pt, parsep=0pt]
    \item We identify off-manifold drift as a fundamental geometric failure mode in DPO-based diffusion alignment, and provide theoretical and empirical analyses connecting this phenomenon to isotropic optimization in high-dimensional denoising space.
    \item We propose APO, a Jacobian-induced anisotropic preference optimization framework together with an efficient forward-only Hutchinson finite-difference approximation, enabling scalable geometry-aware diffusion alignment without reference-model backpropagation.
    \item Extensive experiments on SD1.5 and SDXL across multiple benchmarks demonstrate that APO consistently improves alignment quality, image realism, semantic preservation, and generation diversity while maintaining strong training efficiency.
\end{itemize}

\section{Related Works}

\paragraph{Diffusion Models for Visual Generation} Diffusion models have emerged as the dominant paradigm for image synthesis due to their training stability and high sample quality \cite{sohl2015deep, ddpm, SM, FM}. The introduction of latent diffusion substantially reduced computational costs by operating the denoising process in a compressed representation space, enabling high-resolution text-to-image generation at scale \cite{LD}. This architecture catalyzed a series of large-scale open-weight models, including SDXL \cite{sdxl}, SD3 \cite{SD3-Medium}, and the FLUX series \cite{labs2025flux}, which form the backbone of most contemporary visual generation research. However, diffusion models are often trained on large-scale general-purpose datasets with varying quality, which tends to introduce biases that deviate from human preferences. Therefore, aligning diffusion models with human preferences is a highly necessary practice.

\paragraph{Reward-Based Reinforcement Learning for Diffusion} Early post-training efforts adopt supervised fine-tuning (SFT) on curated high-quality image-text pairs \cite{SFT}. While effective, SFT inherently treats all samples equally and does not capture the comparative nature of human preference. Reinforcement learning from human feedback (RLHF) \cite{christiano2017deep, ouyang2022training} addresses this by first training a proxy reward model on human annotations and then optimizing the diffusion policy with policy gradient methods such as DDPO \cite{DDPO} and DPOK \cite{DPOK}, or by directly backpropagating differentiable rewards as in DRaFT \cite{DRaFT} and AlignProp \cite{AlignProp}. These approaches, however, are prone to reward overoptimization. To bypass explicit reward learning, Diffusion-DPO \cite{DPO} adapts Direct Preference Optimization to diffusion models, enabling offline fine-tuning directly on pairwise preference data without a separate reward model. Subsequently, a series of methods have been proposed to address the weaknesses of Diffusion-DPO in various aspects, including sampling efficiency, reward estimation, model adaptability, and reward diversity \cite{DSPO, KTO, SPO, InPO, MaPO, LPO, ViPO, bai2026thresholdguidedoptimizationvisualgenerative}. In addition, online reinforcement learning methods such as Flow-GRPO \cite{FlowGRPO} and DanceGRPO \cite{danceGRPO} have become an increasingly popular paradigm. Some of these approaches seek to unify tasks spanning text-to-image, text-to-video, and image-to-video generation within a single framework, while others focus on improving computational efficiency \cite{DiffusionNFT, garbacea2025hyperalign, cheng2025diffusion, tang2026v, lee2026reward}. While existing alignment methods primarily modify the reward formulation, optimization strategy, or sampling efficiency, our work instead revisits the geometric assumptions underlying the optimization metric itself.

\paragraph{Manifold Learning for Diffusion} The manifold hypothesis posits that high-dimensional natural data concentrate near a low-dimensional submanifold of the ambient space \cite{bengio2013representation, fefferman2016testing}. In the context of diffusion models, this connection is theoretically well-grounded: the score function is intimately related to the geometry of the data distribution and approximates the normal bundle of the data manifold \cite{SM, pidstrigach2022score, stanczuk2024diffusion, li2025scores}, a structural prior that has motivated a rich line of research connecting manifold learning with generative modeling \cite{huang2022riemannian, liu2025improving, JiT, li2025scores, zhang2026diffusion}. Furthermore, manifold awareness has proven essential for improving inference-time quality, as demonstrated by CFG++ \cite{Cfg++}, TCFG \cite{TCFG2}, and Be Tangential to Manifold \cite{saito2025tangential}. However, in the domain of preference alignment for diffusion models based on standard Euclidean diffusion parameterizations, manifold learning theory has yet to be considered.

\section{Background and Motivation}
In this section, we first review diffusion models and the formulation of Diffusion-DPO. We then revisit the DPO objective from a geometric perspective and show that its isotropic optimization mechanism can induce harmful off-manifold drift during preference alignment.

\subsection{Diffusion Models}
Denoising Diffusion Probabilistic Models (DDPM) \cite{ddpm} define a forward process that gradually transforms a clean sample $\mathbf{x}_0 \sim p_{\text{data}}$ into Gaussian noise over $T$ steps. Following the variance-preserving (VP) schedule, where $\alpha_t = 1 - \beta_t$, $\bar{\alpha}_t = \prod_{s=1}^t \alpha_s$, and $\sigma_t^2 = 1 - \bar{\alpha}_t$, the noisy latent $\mathbf{x}_t$ at any step $t \in [0, T]$ is:
\begin{equation}\label{eq:1}
  \mathbf{x}_t = \sqrt{\bar{\alpha}_t} \mathbf{x}_0 + \sqrt{1 - \bar{\alpha}_t} \epsilon, \quad \epsilon \sim \mathcal{N}(0, I).
\end{equation}
To generate a sample, the model learns a reverse transition distribution $p_\theta(\mathbf{x}_{t-1} | \mathbf{x}_t)$ to approximate the intractable reverse conditional of the forward diffusion process. In the standard DDPM parameterization, this transition is modeled as a Gaussian distribution:
\begin{equation}\label{eq:2}
p_\theta(\mathbf{x}_{t-1}| \mathbf{x}_t) = \mathcal{N} (\mathbf{x}_{t-1}; \mu_\theta(\mathbf{x}_t, t), \sigma_t^2 \mathbf{I});
\end{equation}
where the mean $\mu_\theta(\mathbf{x_t}, t) = \frac{1}{\sqrt{\alpha}_t} \left( \mathbf{x}_t - \frac{\beta_t}{\sqrt{1 - \bar{\alpha}_t}} \epsilon_\theta(\mathbf{x}_t, t) \right)$ is typically parameterized by a neural network that predicts the noise added to $\mathbf{x}_0$. The reverse process aims to reconstruct $\mathbf{x}_0$ by learning a denoising network $\epsilon_\theta(\mathbf{x}_t, t)$ to estimate the added noise $\epsilon$. The training objective is simplified as:
\begin{equation}\label{eq:3}
\mathcal{L}_{\mathrm{DM}}(\theta) = \mathbb{E}_{t \sim U(0,T), \mathbf{x}_0, \epsilon, \mathbf{x}_t \sim q(\mathbf{x}_t \mid \mathbf{x}_0)} \left[ \omega(\gamma_t) \| \epsilon_\theta(\mathbf{x}_t, t) - \epsilon \|^2 \right]
\end{equation}
where $\gamma_t$ is a time-dependent weighting factor. We adopt the standard DDPM formulation here for notation simplicity, and most modern diffusion and flow-based generative models share the same core denoising principle, differing mainly in the prediction target (e.g., $\epsilon$-, $x_0$-, or $v$-prediction), noise parameterization, and timestep weighting strategies.

\subsection{Direct Preference Optimization for Diffusion Models}\label{sec:preference_optimization}
RLHF aligns generative models by maximizing a reward function $r(\mathbf{x}_0, c)$ subject to a Kullback-Leibler (KL) divergence constraint \cite{DPOK}. For diffusion models, this objective is implemented through the denoising trajectory $p_\theta(\mathbf{x}_{0:T}|\mathbf{c})$:
\begin{equation}\label{eq:4}
\max_{\theta} \mathbb{E}_{\textbf{c} \sim \mathcal{D}, \mathbf{x}_0 \sim p_{\theta}(\mathbf{x}_0|\textbf{c})} \left[ r(\mathbf{x}_0, \textbf{c}) \right] - \beta \mathbb{D}_{\text{KL}} \left( p_{\theta}(\mathbf{x}_0 | \textbf{c}) || p_{\text{ref}}(\mathbf{x}_0 | \textbf{c}) \right);
\end{equation}
where $p_{\text{ref}}$ is the pre-trained reference model. While RLHF typically requires training an explicit reward model and employing policy gradient methods, Diffusion-DPO reparameterizes the reward through the optimal policy \cite{DPO}:
\begin{equation}\label{eq:5}
r(\mathbf{x}_0,\textbf{c}) = \beta \log \frac{p_{\theta}^*(\mathbf{x}_0|\textbf{c})}{p_{\text{ref}}(\mathbf{x}_0|\textbf{c})} + \beta \log Z(\textbf{c}).
\end{equation}
By substituting Eq.~\eqref{eq:5} into the Bradley-Terry model $\mathcal{L}_{\text{BT}}(\phi) := -\mathbb{E}_{(\textbf{x}_0^w, \textbf{x}_0^l, \textbf{c}) \sim \mathcal{D}}[\log \sigma(r_\phi(\textbf{x}_0^w, \textbf{c}) - r_\phi(\textbf{x}_0^l, \textbf{c}))]$, the partition function $Z(c)$ cancels out, allowing for direct optimization on preference pairs $(\mathbf{x}_0^w, \mathbf{x}_0^l)$. To adapt DPO to the multi-step nature of diffusion processes, Diffusion-DPO leverages the chain rule of KL divergence to decompose the trajectory-level objective into step-wise denoising objectives, yielding the following loss:
\begin{equation}\label{eq:6}
\begin{aligned}
\mathcal{L}(\theta) \le -\mathbb{E}_{(\textbf{c},\mathbf{x}_0^k) \sim \mathcal{D},\mathbf{x}_t^k \sim q(\mathbf{x}_t^k | \mathbf{x}_0^k)} \log \sigma ( 
\beta T [ \log \frac{p_\theta(\mathbf{x}_{t-1}^w | \mathbf{x}_t^w, \textbf{c})}{p_{\text{ref}}(\mathbf{x}_{t-1}^w | \mathbf{x}_t^w, \textbf{c})} 
- \log \frac{p_\theta(\mathbf{x}_{t-1}^l | \mathbf{x}_t^l, \textbf{c})}{p_{\text{ref}}(\mathbf{x}_{t-1}^l | \mathbf{x}_t^l, \textbf{c})} ]).
\end{aligned}
\end{equation}
For notational simplicity, we omit the conditioning prompt $\textbf{c}$ in subsequent expressions. For each sample $k\in\{w,l\}$, because the reverse transitions are Gaussian with a shared variance $\sigma_t^2$, the per-step log-ratio evaluates to
\begin{equation}\label{eq:at-iso}
\mathbb{E}_{x_{t-1}^k\sim q(\cdot|x_t^k,x_0^k)}\!\,[\, \log\frac{p_\theta(x_{t-1}^k|x_t^k,c)}{p_{\mathrm{ref}}(x_{t-1}^k|x_t^k,c)} ] = \frac{1}{2\sigma_t^2}( \|\mu_q^k - \mu_{\mathrm{ref}}^k\|^2 - \|\mu_q^k - \mu_\theta^k\|^2 ),
\end{equation}
where $\mu_q^k$, $\mu_\theta^k$, $\mu_{\mathrm{ref}}^k$ are the means of $q(\cdot|x_t^k,x_0^k)$, $p_\theta$, and $p_{\mathrm{ref}}$, respectively.
Using the DDPM mean parameterization (Eq.~\eqref{eq:2}), the mean differences become proportional to noise-prediction errors, ultimately simplifying Eq.~\eqref{eq:6} to the practical MSE-based contrastive loss:
\begin{equation}\label{eq:7}
\begin{aligned}
\mathcal{L}(\theta) = & -\mathbb{E}_{\mathbf{x}_0^k \sim D, t \sim U(0,T), \mathbf{x}_t^k \sim q(\mathbf{x}_t^k | \mathbf{x}_0^k)} \log \sigma (-\beta  \omega(\gamma_t) ( \| \epsilon^w - \epsilon_\theta (\mathbf{x}_t^w, t) \|_2^2 \\ 
& \hspace{-2.5em} - \| \epsilon^w - \epsilon_{\text{ref}} (\mathbf{x}_t^w, t) \|_2^2  - ( \| \epsilon^l - \epsilon_\theta (\mathbf{x}_t^l, t) \|_2^2 - \| \epsilon^l - \epsilon_{\text{ref}} (\mathbf{x}_t^l, t) \|_2^2 ) ) ).
\end{aligned}
\end{equation}

\textbf{Eq.~\eqref{eq:7} shows that, at each denoising step, Diffusion-DPO optimizes $\theta$ to reduce the noise prediction error on winner samples, while increasing the corresponding error on loser samples.} Since the reverse diffusion update is parameterized by the predicted noise, the residual $\delta=\epsilon-\epsilon_\theta(x_t,t)$ directly determines the reverse denoising displacement at each timestep, thereby altering the reverse sampling trajectory itself. Consequently, reducing the residual on winner samples encourages trajectories to move toward regions assigned higher likelihood under the aligned model, while enlarging the residual on loser samples progressively redirects trajectories away from dispreferred regions. In this sense, Diffusion-DPO reshapes the transport of probability mass during the reverse diffusion process through trajectory-level modifications of the denoising dynamics. 

These prediction errors are measured using the standard Euclidean norm, treating all residual directions uniformly without considering the local geometric structure of the data distribution. The scalar $\beta$ controls the overall strength of this contrastive preference optimization.

\begin{figure}[t]
  \centering
  \includegraphics[width=0.9\linewidth]{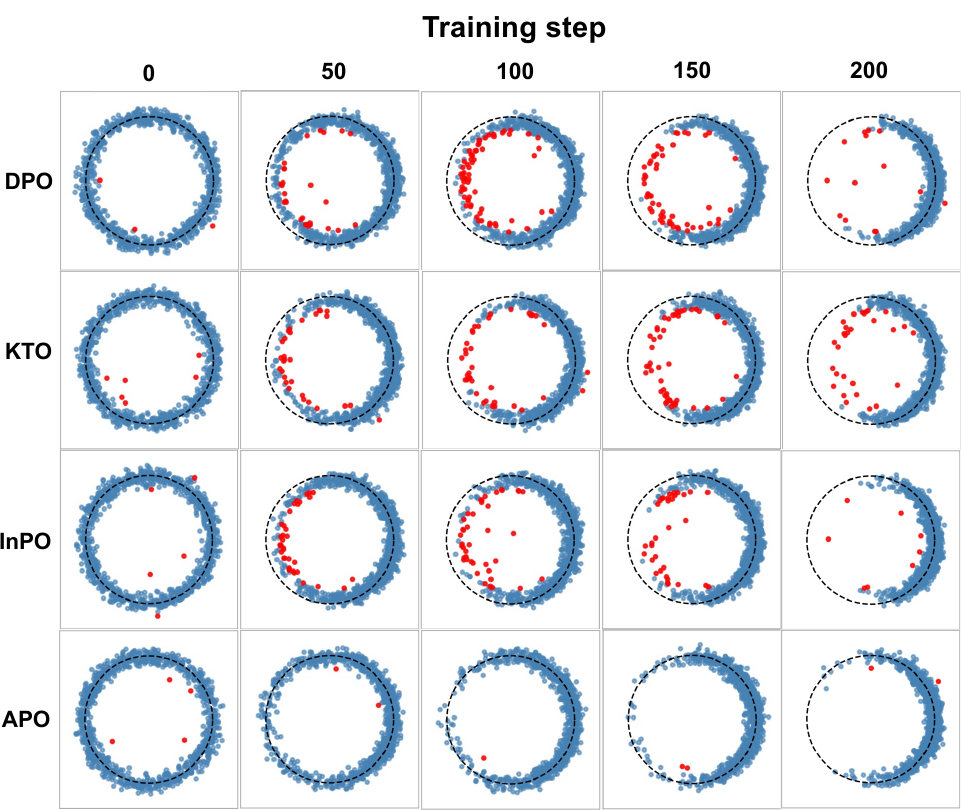}
  \caption{\textbf{Toy preference optimization on a manifold under different methods.} A DDPM with a 4-layer MLP noise predictor is pretrained on a unit circle embedded in $\mathbb{R}^{32}$. \textbf{Preference pairs are constructed by randomly sampling one point from each semicircle, with the right semicircle as the winner.} Columns denote training progress and rows denote methods.  \underline{Red} points indicate off-manifold samples. DPO, KTO, and InPO exhibit substantial off-manifold drift during alignment, whereas APO remains tightly concentrated around the underlying manifold while converging toward the winner region.}
  \label{fig:toy2d}
\end{figure}

\subsection{Off-Manifold Drift in Diffusion-DPO}
\label{sec:limitations}

As discussed in \Cref{sec:preference_optimization}, the training objective in Diffusion-DPO compares the prediction errors of the winner and the loser samples using a uniform Euclidean metric, treating all reverse-trajectory update directions in $\mathbb{R}^D$ equally. Under the manifold hypothesis \cite{bengio2013representation, fefferman2016testing}, however, natural data concentrate near a low-dimensional latent submanifold $\mathcal{M} \subset \mathbb{R}^D$, and the ambient space locally decomposes into tangent directions $T_x\mathcal{M}$ and normal directions $N_x\mathcal{M}$. These directions play fundamentally different geometric roles. Reverse-trajectory displacements along tangent directions tend to preserve semantic consistency and remain close to the data manifold, whereas displacements along normal directions are more likely to move trajectories away from the manifold and degrade image quality. DPO-style diffusion alignment methods measure all trajectory-update discrepancies isotropically under the same Euclidean geometry, without distinguishing geometry-preserving directions from geometry-sensitive ones. As a result, updates along geometry-sensitive manifold-normal directions are insufficiently constrained, while manifold-aligned tangent directions become unnecessarily restricted for semantic adjustment. We refer to this geometric failure mode as \textbf{off-manifold drift}.

We illustrate this phenomenon with a minimal example (\Cref{fig:toy2d}), where a DDPM pretrained on a unit circle embedded in $\mathbb{R}^{32}$ is fine-tuned. In this setup, the left semicircle corresponds to low-reward (loser) regions, while the right semicircle corresponds to high-reward (winner) regions. \textbf{For each preference pair, we randomly sample one point from each semicircle and designate the point from the right semicircle as the winner.} We compare DPO and its Euclidean-metric variants, all of which progressively shift probability mass toward the preferred region, yet exhibit noticeable off-manifold drift during training, producing large numbers of off-manifold samples (\textbf{red}). 

This observation motivates the central question of our work:
\textbf{can we design a geometry-aware anisotropic regularizer that adapts to the local structure of the diffusion representation space?} We provide a positive answer in the next section.

\section{Method}
\label{sec:method}
To address this issue, we introduce Anisotropic Geometry-Aware Preference Optimization (APO), which replaces the isotropic Euclidean penalty in Diffusion-DPO with a Jacobian-induced anisotropic geometry.

Our method consists of three components:
\begin{itemize}[leftmargin=1.5em, topsep=0pt, partopsep=0pt, parsep=0pt]
\item In \autoref{sec:preliminary}, we derive a local geometric characterization of the diffusion model via the Jacobian of the reference score field, which reveals an anisotropic decomposition into tangential and normal directions on the data manifold.
\item In \autoref{sec:apo}, we construct a Jacobian-induced quadratic form that defines a Riemannian metric on the noise prediction space, leading to an anisotropic preference optimization objective.
\item In \autoref{sec:imple}, we provide an efficient implementation for integrating our method into the diffusion alignment framework to streamline the computation.
\end{itemize}

\begin{figure}[t]
\centering
\includegraphics[width=\linewidth]{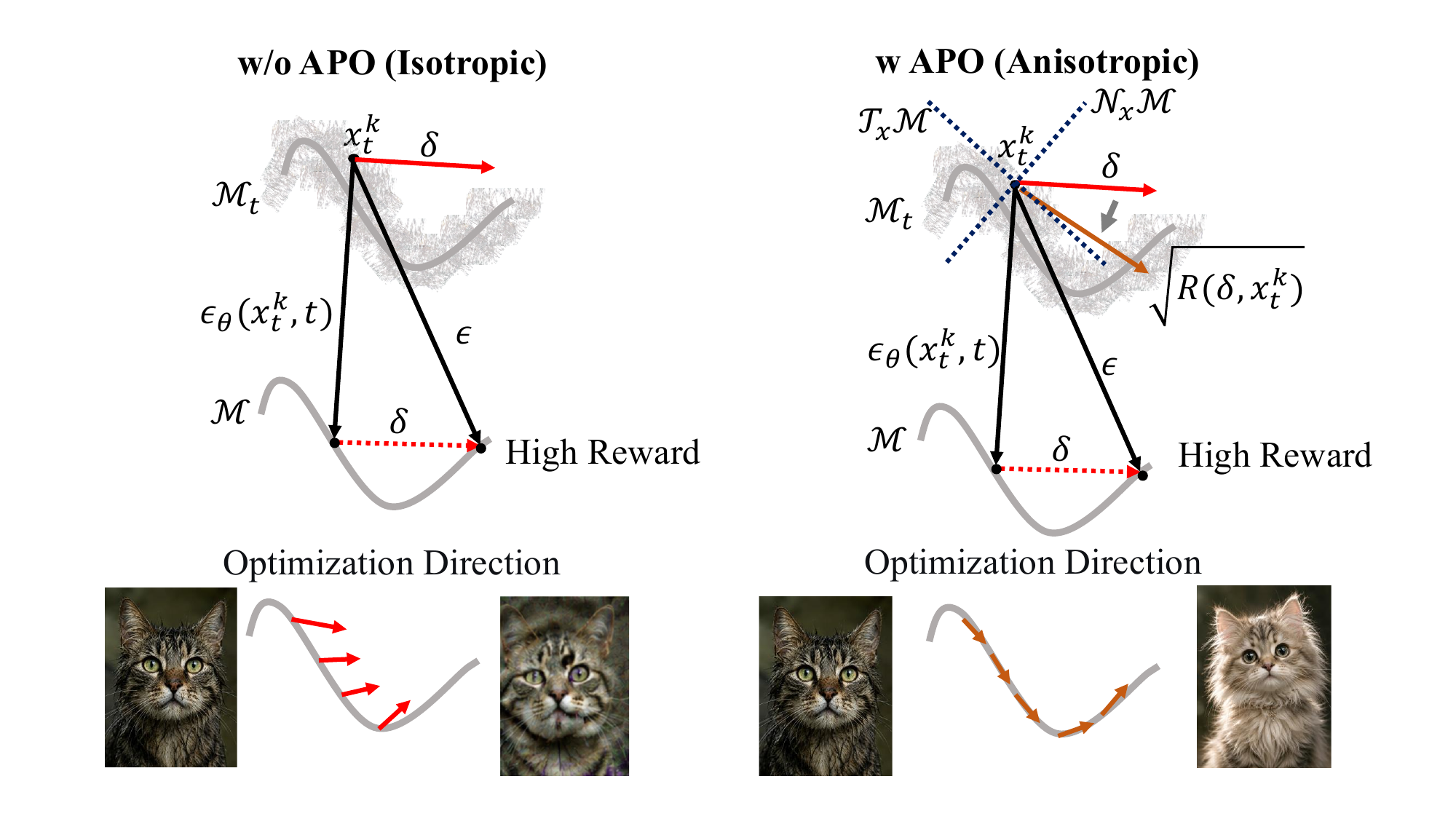}
\caption{\textbf{Schematic overview of our method}. Standard Diffusion-DPO measures the noise-prediction residual $\delta=\epsilon-\epsilon_\theta(x_t^k,t)$ under an isotropic Euclidean metric, treating tangent and normal directions equally and allowing updates to drift away from the local manifold $\mathcal{M}_t$. APO instead employs a Jacobian-induced anisotropic metric $R(\delta;x_t^k)$ from the reference model, which strengthens regularization along geometry-sensitive directions while allowing more flexible movement along manifold-aligned directions}
\label{fig:schematic}
\end{figure}

\subsection{Preliminaries}
\label{sec:preliminary}
\paragraph{Manifold hypothesis and diffusion} Under the manifold hypothesis, natural data concentrate near a low-dimensional latent submanifold $\mathcal{M} \subset \mathbb{R}^D$. While the forward diffusion process perturbs samples away from $\mathcal{M}$, the resulting distribution at time $t$, $p_t(x_t) = p_{\text{data}} * \mathcal{N}(0, \sigma_t^2 I)$ can be viewed as a Gaussian-smoothed version of the data distribution. Importantly, although $x_t$ no longer lies exactly on $\mathcal{M}$, the density $p_t$ retains its geometric structure: probability mass remains concentrated in a tubular neighborhood around the underlying manifold, with sharp variation in normal directions and relatively flat variation along tangent directions.

\paragraph{Score geometry and local anisotropy} The score function $\nabla_{x_t} \log p_t(x_t)$ characterizes how the density changes locally, and thus encodes geometric information of the distribution at scale $\sigma_t$. In particular, prior work has shown that its local variation exhibits anisotropy aligned with the underlying manifold structure~\cite{SM, pidstrigach2022score, stanczuk2024diffusion, bengio2013representation}: directions corresponding to rapid density change are associated with off-manifold (normal) perturbations, while directions with slow variation correspond to on-manifold (tangent) perturbations. In diffusion models, the noise predictor approximates the score function up to a scaling factor: $\epsilon_\theta(x_t,t) \approx -\sigma_t \nabla_{x_t} \log p_t(x_t)$. Therefore, the Jacobian of the noise predictor with respect to the input, $J_{\text{ref}}(x_t, t) = \frac{\partial \epsilon_{\text{ref}}(x_t, t)}{\partial x_t}$, describes how the predicted score field changes under infinitesimal perturbations of $x_t$. Its singular structure therefore reflects the local anisotropy of $p_t$: directions associated with large singular values correspond to perturbations that cause rapid variation of the score field, while directions associated with small singular values correspond to relatively insensitive directions.

\paragraph{Jacobian-Induced Sensitivity Weighting} In Diffusion-DPO, the optimization objective is defined through noise-prediction error $\delta = \epsilon - \epsilon_\theta(x_t,t)$, which lives in the output space of the noise predictor. However, the geometric structure encoded by the score field is characterized through the variation of $\epsilon_{\mathrm{ref}}(x_t,t)$ with respect to perturbations of the input variable $x_t$. To connect these two spaces, we use the Jacobian of the reference model to transport output-space perturbations back to the local coordinate system of the input. Consider a first-order perturbation of the reference score field:
\begin{equation}
\epsilon_{\mathrm{ref}}(x_t+\Delta x,t) \approx \epsilon_{\mathrm{ref}}(x_t,t) + J_{\mathrm{ref}}\,\Delta x.
\end{equation}
Under this local linearization, the Jacobian maps infinitesimal perturbations in the input space to corresponding changes in the output score field. Conversely, for a given output-space perturbation $\delta$, the vector
\begin{equation}
\delta_x := J_{\mathrm{ref}}^\top \delta
\label{eq:pullback}
\end{equation}
measures how strongly $\delta$ aligns with locally sensitive directions of the score field in the input space. Intuitively, prediction errors aligned with directions along which the score changes rapidly induce larger transported responses, while errors aligned with insensitive directions induce weaker responses. This Jacobian-transported directional variation will serve as the basis for our anisotropic regularization.

\subsection{APO: Anisotropic Geometry-Aware Preference Optimization}
\label{sec:apo}
We now formalize the geometric intuition developed in \autoref{sec:preliminary} into a concrete optimization objective. Let $J_{\mathrm{ref}}^k := \frac{\partial\epsilon_{\mathrm{ref}}(x_t^k,t)}{\partial x_t^k}$ be the Jacobian of the reference noise predictor at $x_t^k$, with SVD $J_{\mathrm{ref}}^k = U\Sigma V^\top$, where $\Sigma = \mathrm{diag}(\sigma_1,\dots,\sigma_D)$ and $U,V$ are orthonormal. As established in \autoref{sec:preliminary}, large singular values $\sigma_i$ correspond to directions along which the score field varies most rapidly. The right singular vectors $v_i$ define these directions in the input space, while the corresponding left singular vectors satisfy $u_i = J_{\mathrm{ref}}v_i/\sigma_i$, representing their images in the output (noise-prediction) space.

\begin{proposition}[APO Norm and Metric]
\label{prop:apo-metric}
For any noise-prediction error $\delta \in \mathbb{R}^D$, define the
anisotropic quadratic form
\begin{equation}
    R(\delta;\, x^k_t)
    \;\triangleq\;
    \|\delta\|^2 + \lambda\|J^{k\top}_{\mathrm{ref}}\,\delta\|^2
    \;=\;
    \sum_{i=1}^{D}(1+\lambda\sigma_i^2)\,(u_i^\top\delta)^2,
    \label{eq:R-def}
\end{equation}
where $\lambda>0$ is the anisotropic regularization coefficient.
Then $\|\delta\|_A := \sqrt{R(\delta;\,x^k_t)}$ defines a norm on $\mathbb{R}^D$,
and $d_A(\delta_1,\delta_2):=\|\delta_1-\delta_2\|_A$ defines a metric.
\end{proposition}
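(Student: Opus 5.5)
The plan is to write $R$ as a quadratic form with a symmetric positive definite matrix; then the norm and metric claims follow from standard inner-product-space facts. Set $A \triangleq I + \lambda J_{\mathrm{ref}}^k J_{\mathrm{ref}}^{k\top}$. Then
\begin{equation*}
R(\delta;x_t^k)=\delta^\top\delta+\lambda\,\delta^\top J_{\mathrm{ref}}^k J_{\mathrm{ref}}^{k\top}\delta=\delta^\top A\,\delta .
\end{equation*}
The matrix $A$ is symmetric. For any $\delta$ we have $\delta^\top A\delta=\|\delta\|^2+\lambda\|J_{\mathrm{ref}}^{k\top}\delta\|^2\ge\|\delta\|^2$, so $A$ is positive definite because $\lambda>0$.

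I will verify the spectral expression in \eqref{eq:R-def} using the SVD $J_{\mathrm{ref}}^k=U\Sigma V^\top$. This gives $J J^\top=U\Sigma^2U^\top$, hence $A=U(I+\lambda\Sigma^2)U^\top$. Therefore
\begin{equation*}
\delta^\top A\delta=\sum_i(1+\lambda\sigma_i^2)(u_i^\top\delta)^2,
\end{equation*}
since $U$ is orthonormal. This also shows that the eigenvalues of $A$ are $1+\lambda\sigma_i^2\ge1$, which gives positive definiteness a second way.

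Next I define $\langle a,b\rangle_A:=a^\top A b$. Symmetry and positive definiteness of $A$ make this an inner product, and $\|\delta\|_A=\sqrt{\langle\delta,\delta\rangle_A}$ is the norm it induces. The norm axioms are checked as follows.
\begin{itemize}[leftmargin=1.5em, topsep=0pt, partopsep=0pt, parsep=0pt]
\item \emph{Positive definiteness:} $\|\delta\|_A\ge\|\delta\|$, so $\|\delta\|_A=0$ forces $\delta=0$.
\item \emph{Absolute homogeneity:} $R(c\delta)=c^2R(\delta)$ gives $\|c\delta\|_A=|c|\,\|\delta\|_A$.
\item \emph{Triangle inequality:} this is the step needing an argument. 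I would first prove Cauchy--Schwarz $|\langle a,b\rangle_A|\le\|a\|_A\|b\|_A$. Then I expand $\|a+b\|_A^2=\|a\|_A^2+2\langle a,b\rangle_A+\|b\|_A^2\le(\|a\|_A+\|b\|_A)^2$.
\end{itemize}
A cleaner route to the triangle inequality is to write $\|\delta\|_A=\|A^{1/2}\delta\|_2$, where $A^{1/2}=U(I+\lambda\Sigma^2)^{1/2}U^\top$. The Euclidean triangle inequality then transfers directly, because $A^{1/2}$ is linear and invertible. Alternatively, observe that $\|\delta\|_A=\|(\delta,\sqrt{\lambda}J^{\top}\delta)\|_2$ is the Euclidean norm of a linear injective image in $\mathbb{R}^{2D}$.

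Finally, the metric property of $d_A(\delta_1,\delta_2)=\|\delta_1-\delta_2\|_A$ follows in the standard way from the norm axioms.
\begin{itemize}[leftmargin=1.5em, topsep=0pt, partopsep=0pt, parsep=0pt]
\item Nonnegativity and identity of indiscernibles come from positive definiteness.
\item Symmetry comes from homogeneity with $c=-1$.
\item The triangle inequality for $d_A$ follows from $\delta_1-\delta_3=(\delta_1-\delta_2)+(\delta_2-\delta_3)$.
\end{itemize}

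No step is a real obstacle. The only point requiring care is that strict positivity depends on $\lambda>0$ together with the identity term $\|\delta\|^2$. The identity term guarantees definiteness even when $J_{\mathrm{ref}}^k$ is rank-deficient, for example when many $\sigma_i=0$ in tangent directions.
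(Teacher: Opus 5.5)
Your proposal is correct and follows the paper's proof essentially step for step: write $R=\delta^\top M\delta$ with $M=I+\lambda J_{\mathrm{ref}}^{k}J_{\mathrm{ref}}^{k\top}\succeq I\succ 0$, view $\|\cdot\|_A$ as the norm induced by the inner product $\langle\cdot,\cdot\rangle_M$, and obtain the triangle inequality from Cauchy--Schwarz. Your $A^{1/2}$ and $\mathbb{R}^{2D}$-embedding routes are equally valid shortcuts, you spell out the metric axioms that the paper leaves implicit, and the one small slip is that positive definiteness comes from the identity term and needs only $\lambda\ge 0$, so ``because $\lambda>0$'' credits the wrong ingredient.
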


\begin{proof}
Write $R(\delta;\,x^k_t) = \delta^\top M \delta$, where
\[
    M \;:=\; I + \lambda\, J^k_{\mathrm{ref}} J^{k\top}_{\mathrm{ref}}
      \;=\; \sum_{i=1}^D (1+\lambda\sigma_i^2)\,u_i u_i^\top.
\]
Since $\lambda \ge 0$ and $J^k_{\mathrm{ref}}J^{k\top}_{\mathrm{ref}} \succeq 0$,
we have $M \succeq I \succ 0$, so $\langle\delta_1,\delta_2\rangle_M
:= \delta_1^\top M\delta_2$ is an inner product on $\mathbb{R}^D$ and
$\|\cdot\|_A = \sqrt{\langle\cdot,\cdot\rangle_M}$ is its induced norm.
The three axioms follow directly:

\textbf{(1) Positive definiteness:} $M\succ 0 \;\Rightarrow\;
\delta^\top M\delta > 0\ (\delta\neq\mathbf{0}),\quad
\mathbf{0}^\top M\mathbf{0} = 0.$

\textbf{(2) Homogeneity:}
$\|\alpha\delta\|_A = \sqrt{\alpha^2\delta^\top M\delta} = |\alpha|\|\delta\|_A.$

\textbf{(3) Triangle inequality:}
$\|\delta_1+\delta_2\|_A^2
= \|\delta_1\|_A^2 + 2\langle\delta_1,\delta_2\rangle_M + \|\delta_2\|_A^2
\le \|\delta_1\|_A^2 + 2\|\delta_1\|_A\|\delta_2\|_A + \|\delta_2\|_A^2
= (\|\delta_1\|_A+\|\delta_2\|_A)^2,$
where the inequality applies Cauchy--Schwarz to $\langle\cdot,\cdot\rangle_M$.
\end{proof}

\begin{remark}
\label{rmk}
At high noise levels, the singular values of $J^k_{\mathrm{ref}}$ become nearly uniform, so $(1+\lambda\sigma_i^2)\to c$ for all $i$ and $\|\cdot\|_A$ reduces to a scalar multiple of the Euclidean norm, recovering standard Diffusion-DPO. Components of $\delta$ aligned with $u_i$ directions of large $\sigma_i$ receive stronger penalties, encoding the off-manifold sensitivity of the reference score field. The identity term $\|\delta\|^2$ ensures that $R$ remains strictly positive definite regardless of the spectral structure of $J_{\mathrm{ref}}^k$. In particular, at low noise levels, possible rank-deficiency of $J_{\mathrm{ref}}^k$ in directions orthogonal to the score field is compensated by the identity term, preventing degenerate penalties.
\end{remark}

We replace the Euclidean norm used in ~\eqref{eq:7} with the anisotropic metric $R(\cdot;\,x_t^k)$, yielding:
\begin{equation}
  \ell_A^k
  \;:=\; R(\epsilon^k - \epsilon_\theta(x_t^k,t);\,x_t^k)
       - R(\epsilon^k - \epsilon_{\mathrm{ref}}(x_t^k,t);\,x_t^k).
  \label{eq:ell-A}
\end{equation}
Intuitively, $\ell_A^k$ measures the relative denoising quality of the current policy against the reference model under the geometry-aware anisotropic metric. The same Jacobian $J_{\mathrm{ref}}^k$ is used for both terms, ensuring a symmetric comparison.
Following the Jensen-bound and SNR-reweighting steps of Diffusion-DPO~\citep{DPO},
the final APO training objective is
\begin{equation}
  \mathcal{L}_{\mathrm{APO}}(\theta)
  = -\mathbb{E}_{\mathbf{x}_0^k \sim D,\, t \sim U(0,T),\, \mathbf{x}_t^k \sim q(\mathbf{x}_t^k|\mathbf{x}_0^k)}\log \sigma (-\beta  \omega(\gamma_t)\,\!(\ell_A^w - \ell_A^l)).
  \label{eq:adpo-loss}
\end{equation}
Setting $\lambda=0$ recovers Diffusion-DPO exactly.

\subsection{Efficient Implementation}
\label{sec:imple}
Evaluating the anisotropic metric $R(\delta; x_t^k)$ defined in ~\eqref{eq:R-def}, as instantiated in ~\eqref{eq:ell-A} ~\eqref{eq:adpo-loss}, requires the vector-Jacobian product $(J_{\mathrm{ref}}^{k})^\top\delta$, which is computed via reverse-mode autodiff. For large-scale architectures, this operation retains the full activation graph of $\epsilon_{\mathrm{ref}}$ and incurs a memory footprint of $O(|\theta_{\mathrm{ref}}|)$, limiting practical scalability. We therefore develop a forward-only estimator that replaces the reference backward pass with a single additional forward evaluation of $\epsilon_{\mathrm{ref}}$. By the Hutchinson trace identity, for any fixed vector $\delta \in \mathbb{R}^{D}$:
\begin{equation}
  \|(J_{\mathrm{ref}}^{k})^\top\delta\|^2
  =
  \mathbb{E}_{v\sim\mathcal{N}(0,I)} \,
  \!\Bigl[\bigl((J_{\mathrm{ref}}^{k}\,v)^\top\delta\bigr)^2\Bigr].
  \label{eq:hutchinson}
\end{equation}

For a perturbation radius $r > 0$ and $n_v$ independent random probe directions $v^{(j)}\sim\mathcal{N}(0,I)$, $j=1,\ldots,n_v$, we approximate the JVP for each probe by a first-order finite difference:
\begin{equation}
  \widehat{J_{\mathrm{ref}}^{k}\,v^{(j)}}
  :=
  \frac{\epsilon_{\mathrm{ref}}(x_t^{k} + r v^{(j)},\,t)
  - \epsilon_{\mathrm{ref}}(x_t^{k},\,t)}{r},
  \label{eq:fd-jvp}
\end{equation}
which yields the $n_v$-sample estimator
\begin{equation}
\begin{aligned}
  \widehat{\|(J_{\mathrm{ref}}^{k})^\top\delta\|^2} := \frac{1}{n_v}\sum_{j=1}^{n_v} \bigl(\widehat{J_{\mathrm{ref}}^{k}\,v^{(j)}}^{\!\top}\delta\bigr)^2 
  =
  \frac{1}{n_v}\sum_{j=1}^{n_v}
  \frac{1}{r^2}
  \Bigl(
  \bigl[\epsilon_{\mathrm{ref}}(x_t^{k}+r v^{(j)},\,t)
  -\epsilon_{\mathrm{ref}}(x_t^{k},\,t)\bigr]^{\!\top}\!\delta
  \Bigr)^2.
  \label{eq:fd-estimator}
\end{aligned}
\end{equation}

A Taylor expansion of $\epsilon_{\mathrm{ref}}$ around $x_t^{k}$ yields:
\begin{equation}
  \epsilon_{\mathrm{ref}}(x_t^{k}+r v) = \epsilon_{\mathrm{ref}}(x_t^{k}) + r J_{\mathrm{ref}}^k v + \frac{r^2}{2} H_{\epsilon_{\mathrm{ref}}}[v,v] + O(r^3),
\end{equation}
where $H_{\epsilon_{\mathrm{ref}}}$ denotes the Hessian of $\epsilon_{\mathrm{ref}}$.
Substituting into the finite-difference estimator, the dominant bias term arises from the inner product of the quadratic Hessian term with itself.
The cross-term between the linear term $J_{\mathrm{ref}}^k v$ and the quadratic Hessian term $H_{\epsilon_{\mathrm{ref}}}[v,v]$ is a degree-three polynomial in $v$,
whose expectation vanishes identically under symmetric Gaussian measure.
The remaining second-order bias satisfies:
\begin{equation}
  \mathbb{E}_{v}\!\Bigl[
  \widehat{\|(J_{\mathrm{ref}}^{k})^\top\delta\|^2}
  \Bigr]
  =
  \|(J_{\mathrm{ref}}^{k})^\top\delta\|^2
  +
  O\left(r^2 \cdot \|H_{\epsilon_{\mathrm{ref}}}\|_F^2\right).
  \label{eq:bias}
\end{equation}
In practice, setting $r \ll 1$ renders this $O(r^2)$ residual negligible relative to the signal magnitude
$\|(J_{\mathrm{ref}}^{k})^\top\delta\|^2$, while keeping the
finite-difference step well within the linear regime of $\epsilon_{\mathrm{ref}}$.

Replacing $R(\delta; x_t^{k})$ in \Cref{eq:ell-A} with $\hat{R}(\delta; x_t^{k})$,
defined by substituting \Cref{eq:fd-jvp} into \Cref{eq:R-def}, gives the forward-only
anisotropic metric:
\begin{equation}
  \hat{R}(\delta; x_t^{k})
  :=
  \|\delta\|^2
  +
  \frac{\lambda}{n_v}\sum_{j=1}^{n_v}
  \frac{1}{r^2}
  \Bigl(
  \bigl[\epsilon_{\mathrm{ref}}(x_t^{k}+r v^{(j)},\,t)
  -\epsilon_{\mathrm{ref}}(x_t^{k},\,t)\bigr]^{\!\top}\!\delta
  \Bigr)^2.
  \label{eq:fd-metric}
\end{equation}
Substituting $\hat{R}$ for $R$ in \Cref{eq:adpo-loss} defines the APO training objective. For each preference pair $(x_0^w, x_0^l)$ and sampled timestep $t$, the additional cost of evaluating $\hat{R}$ reduces to $n_v + 1$ forward passes of $\epsilon_{\mathrm{ref}}$ under \texttt{torch.no\_grad()}: a baseline pass at $x_t^{k}$ (shared across all probe directions) and $n_v$ perturbed passes at $x_t^{k} + r v^{(j)}$, $j = 1, \ldots, n_v$. The baseline pass is already required by standard Diffusion-DPO and thus incurs no additional cost. Crucially, for a given sample $k$, the same perturbed outputs serve both $\hat{R}(\delta_{\theta}^{k}; x_t^{k})$ and $\hat{R}(\delta_{\mathrm{ref}}^{k}; x_t^{k})$, since only the inner product partner $\delta$ changes between the two evaluations. This eliminates the reference backward pass entirely, reducing the activation memory from $O(|\theta_{\mathrm{ref}}|)$ to the negligible cost of storing $r v$. 
\begin{algorithm}[t]
\caption{APO: Single Training Step (Hutchinson-FD Variant)}
\label{alg:apo-fd}
\begin{algorithmic}[1]
\REQUIRE{%
  Pre-trained reference model $\epsilon_{\mathrm{ref}}$ (frozen);
  current policy $\epsilon_\theta$;
  preference pair $(\mathbf{x}_0^w,\mathbf{x}_0^l)$;
  hyperparameters $\beta$, $\lambda$, $r$, $n_v$;
  SNR weighting $\omega(\gamma_t)$.
}
\ENSURE{Updated parameters $\theta \leftarrow \theta - \eta \nabla_\theta \mathcal{L}_{\mathrm{APO}}$.}

\STATE Sample $t \sim \mathcal{U}(0,T)$ and $\epsilon \sim \mathcal{N}(0,I)$
       \COMMENT{Shared noise $\epsilon$}

\FOR{$k \in \{w,l\}$}
  \STATE $\delta_{\theta}^k \leftarrow \epsilon-\epsilon_\theta(\mathbf{x}_t^k,t)$;\enskip
         $\delta_{\mathrm{ref}}^k \leftarrow \epsilon-\epsilon_{\mathrm{ref}}(\mathbf{x}_t^k,t)$
         \COMMENT{Unified $\epsilon$}

  \STATE $p^k \leftarrow 0$;\enskip $q^k \leftarrow 0$
  
  \FOR{$j=1,\ldots,n_v$}
    \STATE Sample $v^{(j)}\sim\mathcal{N}(0,I)$

    \STATE $\Delta\epsilon^{k,j}
           \leftarrow
           \epsilon_{\mathrm{ref}}(\mathbf{x}_t^k+r\,v^{(j)},t)
           -\epsilon_{\mathrm{ref}}(\mathbf{x}_t^k,t)$

    \STATE $p^k \mathrel{+}=
           \frac{((\Delta\epsilon^{k,j})^\top\delta_\theta^k)^2}{n_vr^2}$;\enskip
           $q^k \mathrel{+}=
           \frac{((\Delta\epsilon^{k,j})^\top\delta_{\mathrm{ref}}^k)^2}{n_vr^2}$
  \ENDFOR

  \STATE $\ell_A^k \leftarrow
         \|\delta_\theta^k\|^2-\|\delta_{\mathrm{ref}}^k\|^2
         +\lambda(p^k-q^k)$
         \COMMENT{Anisotropic preference loss}
\ENDFOR

\STATE $\mathcal{L}_{\mathrm{APO}}
       \leftarrow
       -\log\sigma\!\left(
       -\beta\,\omega(\gamma_t)(\ell_A^w-\ell_A^l)\right)$

\STATE Backpropagate through $\mathcal{L}_{\mathrm{APO}}$ and update $\theta$
\end{algorithmic}
\end{algorithm}
\subsection{Analysis and Validation}
\label{sec:prop-comp}
\paragraph{Sanity check on the toy setting}
Returning to the toy experiment of \Cref{sec:limitations}, we apply APO under identical conditions. As shown in the bottom row of \Cref{fig:toy2d}, APO avoids the off-manifold drift and the runaway shift observed in standard DPO, maintaining manifold integrity while converging stably within the winner region, and requiring fewer alignment steps than standard DPO. This suggests that the Jacobian-induced metric effectively suppresses off-manifold drift even in this minimal setting, providing a clean qualitative validation before scaling to large text-to-image models.

\begin{figure}[t]
\centering
\includegraphics[width=\linewidth]{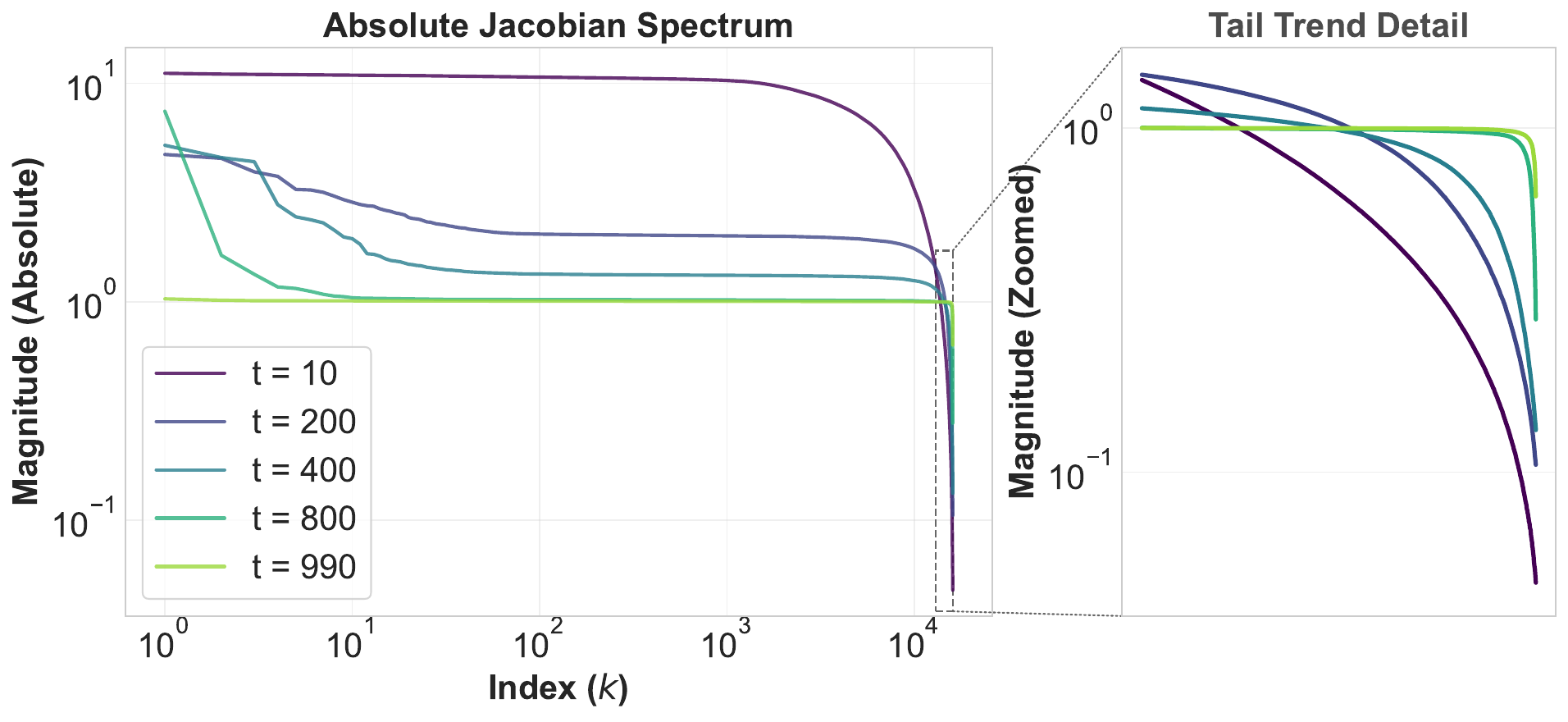}
\caption{\textbf{Spectrum analysis of the reference Jacobian on the Pick-a-Pic v2 dataset.} The plot shows the singular value distribution of $J_{\mathrm{ref}}$, illustrating the anisotropic structure captured by our approach.}
\label{fig:spectrum}
\end{figure}

\paragraph{Jacobian spectral analysis}
We further analyze the singular value spectrum of $J_{\mathrm{ref}}$ across diffusion timesteps on the Pick-a-Pic v2 dataset \cite{kirstain2023pick}, to empirically validate the induced geometric structure. The experimental results are shown in \Cref{fig:spectrum}: singular values are concentrated in a relatively stable high-energy band (approximately $1$ to $10$ in magnitude), while a sharp decay emerges beyond a critical index around $1.2\times10^4$ in a $16{,}384$-dimensional latent space. This effective rank decreases as the noise level increases. At larger diffusion timesteps (higher noise levels), the Jacobian spectrum becomes flatter and more uniform, consistent with the progressively isotropic geometry discussed in \Cref{rmk}.

\paragraph{Single-probe stability}
Although the Hutchinson estimator in ~\Cref{eq:fd-metric} is a stochastic approximation, a single probe direction ($n_v=1$) is already sufficient in practice. The resulting optimization remains highly stable, and this behavior arises from the temporal coherence of diffusion trajectories themselves. APO does not require recovering the full Jacobian structure at each timestep. The objective only depends on the dominant anisotropic direction induced by $J_{\mathrm{ref}}^\top \delta_t$. Along the diffusion trajectory, nearby timesteps share highly correlated geometric structure and similar local manifold orientation. As a result, although a single probe may introduce instantaneous stochasticity, its projected direction repeatedly aligns with the same underlying anisotropic component across neighboring noise scales. 

Consequently, the stochastic errors introduced by individual probes tend to fluctuate inconsistently across timesteps, while the persistent geometric component accumulates coherently throughout optimization. This creates a natural temporal averaging effect along the diffusion trajectory: unstable probe noise is progressively canceled, whereas stable manifold-aware directions are repeatedly reinforced. In practice, this reduces the variance to a low level, making $n_v=1$ already sufficient to produce stable optimization dynamics. Naturally, increasing the number of probe directions can further improve the accuracy and stability of the anisotropic estimate. However, the empirical gains quickly exhibit diminishing returns relative to the additional computational cost. Therefore, we retain the multi-probe formulation in ~\Cref{alg:apo-fd} for completeness, and provide a detailed ablation study on $n_v$ in ~\Cref{sec:nv}.

\section{Experiments}
Our experiments seek to answer three questions:
\begin{itemize}[leftmargin=1.5em, topsep=0pt, partopsep=0pt, parsep=0pt]
\item Does APO improve preference alignment quality and generation behavior across architectures and benchmarks?
\item Are the gains truly attributable to the proposed Jacobian-induced anisotropic geometry?
\item Under what regimes is the geometric regularization most beneficial?
\end{itemize}
\autoref{sec:main} presents the main quantitative and qualitative results. \autoref{sec:ablation} provides ablations and detailed analyses of the underlying geometric mechanism.

\subsection{Setup}
\paragraph{Models, Datasets, and Baselines} We conduct experiments on two widely used text-to-image backbones, Stable Diffusion 1.5 (SD1.5) and Stable Diffusion XL (SDXL)~\cite{sdxl}. All models are fine-tuned on the Pick-a-Pic v2 dataset~\cite{kirstain2023pick}, which provides 851,293 training preference pairs over 58,960 distinct prompts after removing ties. We compare our method against the unmodified base models (SD1.5 and SDXL), supervised fine-tuning (SFT) on winner images~\cite{SFT}, Diffusion-DPO~\cite{DPO}, InPO~\cite{InPO}, and Diffusion-SDPO~\cite{SDPO}. Additionally, on SD1.5, we also compare against Diffusion-KTO~\cite{KTO}, and on SDXL, we additionally compare against LPO~\cite{LPO}. All the above baselines employ the uniform Euclidean treatment of prediction errors. Our experiments are evaluated on the HPSv2~\cite{wu2023human}, Parti-Prompts~\cite{yu2022parti} and Pick-a-Picv2~\cite{kirstain2023pick} test sets, following standard protocols.
\paragraph{Evaluation} We assess alignment quality using three reward models that capture human preference: PickScore~\cite{kirstain2023pick}, HPS~\cite{wu2023human}, and the LAION Aesthetic Predictor~\cite{schuhmann2022laion}. We further report CLIP score~\cite{radford2021clip} for text-image alignment. Furthermore, we conducted a human evaluation on SDXL across four dimensions. We recruited 40 professional participants, each evaluating 50 different sets of images. Each set included the base model, DPO, and APO, with the order randomly shuffled. For each set, participants rated images across four dimensions: text-content alignment, image realism, aesthetic quality, and overall preference. We also evaluated the training efficiency on HPSv2 for SD1.5 and SDXL: we report the single-GPU (NVIDIA A100) training hours and peak GPU memory (GB) on SD1.5 and SDXL, along with a comparison against the standard DPO method.

\paragraph{Implementation Details} We train APO on all models for 900 optimization steps with a global batch size of 2048. For SD1.5, we perform full fine-tuning. For SDXL, we adopt LoRA-based fine-tuning to reduce memory and computational cost, using rank $r=32$, scaling factor $\alpha=32$, and dropout rate $0.05$. The text encoder remains frozen throughout training. Optimization is performed using AdamW with a cosine learning rate schedule. The learning rate follows the scaling rule $\frac{2000}{\beta} \times 2.048 \times 10^{-8}$. We use mixed-precision training (bf16) for all models. For APO-specific hyperparameters, we use $\lambda=0.8$ and $\beta=500$ for SD1.5, and $\lambda=1.0$ and $\beta=800$ for SDXL. The finite-difference perturbation radius is set to $r=10^{-3}$, and we use a single Hutchinson probe direction ($n_v=1$). During inference, we use 50 denoising steps with classifier-free guidance scales of 7.5 for SD1.5 and 5.0 for SDXL. For methods with publicly released checkpoints, we directly load the official weights~\cite{SDPO, KTO, LPO}.

\begin{table}[!ht]
\centering

\caption{\textbf{Main results and Win Rate comparison on SD1.5.} Left: Absolute scores on four benchmarks. Right: Win Rate (\%) of \textbf{APO (Ours)} vs. each baseline. The best results are \textbf{bolded}, second best are \underline{underlined}. APO achieves the best or second-best performance on nearly all metrics across all three evaluation sets, and the win-rate results further corroborate these gains.}

\label{tab:combined-results}

\footnotesize
\setlength{\tabcolsep}{2pt}

\begin{tabular}{@{\hspace{2pt}}
l l
cccc
p{1pt}
ccccc
@{\hspace{2pt}}
}

\toprule
\specialrule{0.05em}{0pt}{2pt}

& &
\multicolumn{4}{c}{Absolute Scores $\uparrow$}
&&
\multicolumn{5}{c}{Win Rate (\%) vs. Baseline $\uparrow$}
\\

\cmidrule{3-6}
\cmidrule{8-12}

Dataset
& Method
& Aes.
& Pick.
& CLIP
& IR.
&&
Aes.
& Pick.
& CLIP
& IR.
& Avg.
\\

\midrule

\multirow{7}{*}{HPSv2}

& Base (SD 1.5)
& 5.2473
& 21.037
& 33.258
& 0.1726
&&
\textbf{85.24}
& \textbf{87.45}
& \textbf{57.83}
& \textbf{85.71}
& \textbf{79.06}
\\

& SFT
& 5.7971
& 21.738
& 33.569
& 0.5162
&&
\textbf{53.61}
& \textbf{68.98}
& \textbf{57.06}
& \textbf{63.75}
& \textbf{60.85}
\\

& Diffusion-DPO
& 5.4192
& 21.551
& \underline{35.412}
& 0.3430
&&
\textbf{65.42}
& \textbf{70.13}
& \textbf{51.20}
& \textbf{66.99}
& \textbf{63.44}
\\

& SDPO
& 5.4365
& 21.715
& 34.532
& 0.3936
&&
\textbf{58.27}
& \textbf{63.83}
& \textbf{56.96}
& \textbf{61.45}
& \textbf{60.13}
\\

& Diffusion-KTO
& 5.6084
& 21.672
& 34.107
& \underline{0.6295}
&&
\textbf{60.08}
& \textbf{71.62}
& \textbf{54.18}
& \textbf{59.30}
& \textbf{61.30}
\\

& InPO
& \underline{5.7988}
& \underline{21.742}
& 34.680
& 0.5379
&&
47.70
& \textbf{64.15}
& \textbf{56.85}
& \textbf{58.38}
& \textbf{56.77}
\\

& \textbf{APO (Ours)}
& \textbf{5.8076}
& \textbf{22.014}
& \textbf{35.873}
& \textbf{0.6887}
&&
\textemdash
& \textemdash
& \textemdash
& \textemdash
& \textemdash
\\

\midrule

\multirow{7}{*}{Parti-Prompts}

& Base (SD 1.5)
& 5.1386
& 21.048
& 31.276
& 0.1543
&&
\textbf{83.69}
& \textbf{82.14}
& \textbf{54.92}
& \textbf{86.34}
& \textbf{76.77}
\\

& SFT
& 5.6872
& 21.725
& 31.697
& 0.4360
&&
\textbf{56.00}
& \textbf{64.25}
& \textbf{55.64}
& \textbf{67.13}
& \textbf{60.76}
\\

& Diffusion-DPO
& 5.3952
& 21.693
& 32.016
& 0.3987
&&
\textbf{69.78}
& \textbf{63.19}
& \textbf{54.66}
& \textbf{68.43}
& \textbf{64.02}
\\

& SDPO
& 5.4127
& 21.718
& 31.945
& 0.5215
&&
\textbf{55.31}
& \textbf{60.82}
& \textbf{56.14}
& \textbf{65.87}
& \textbf{59.54}
\\

& Diffusion-KTO
& 5.5601
& 21.711
& \underline{32.724}
& \underline{0.5392}
&&
\textbf{64.44}
& \textbf{72.96}
& 49.30
& \textbf{65.01}
& \textbf{62.93}
\\

& InPO
& \textbf{5.7691}
& \underline{21.736}
& 32.681
& 0.5381
&&
42.97
& \textbf{59.26}
& \textbf{53.20}
& \textbf{58.38}
& \textbf{53.45}
\\

& \textbf{APO (Ours)}
& \underline{5.7310}
& \textbf{22.114}
& \textbf{32.768}
& \textbf{0.5417}
&&
\textemdash
& \textemdash
& \textemdash
& \textemdash
& \textemdash
\\

\midrule

\multirow{7}{*}{Pick-a-Pic}

& Base (SD 1.5)
& 5.3279
& 21.008
& 32.086
& 0.2492
&&
\textbf{84.02}
& \textbf{85.55}
& \textbf{53.84}
& \textbf{83.09}
& \textbf{76.63}
\\

& SFT
& 5.5504
& 21.691
& 32.351
& 0.4630
&&
\textbf{54.71}
& \textbf{67.91}
& \textbf{55.36}
& \textbf{61.45}
& \textbf{59.86}
\\

& Diffusion-DPO
& 5.3862
& 21.583
& 33.637
& 0.3842
&&
\textbf{66.85}
& \textbf{63.23}
& \textbf{54.99}
& \textbf{67.35}
& \textbf{63.11}
\\

& SDPO
& \underline{5.6870}
& 21.730
& \underline{34.493}
& 0.5897
&&
\textbf{51.31}
& \textbf{63.54}
& \textbf{50.37}
& \textbf{60.50}
& \textbf{56.43}
\\

& Diffusion-KTO
& 5.6645
& 21.252
& 33.802
& 0.4992
&&
\textbf{64.44}
& \textbf{72.96}
& \textbf{54.30}
& \textbf{65.01}
& \textbf{64.18}
\\

& InPO
& 5.5471
& \underline{21.754}
& 34.468
& \underline{0.6231}
&&
\textbf{52.30}
& \textbf{60.67}
& \textbf{53.82}
& \textbf{52.53}
& \textbf{54.83}
\\


& \textbf{APO (Ours)}
& \textbf{5.6892}
& \textbf{21.956}
& \textbf{34.784}
& \textbf{0.6698}
&&
\textemdash
& \textemdash
& \textemdash
& \textemdash
& \textemdash
\\

\bottomrule

\end{tabular}
\end{table}
\subsection{Main Results}
\label{sec:main}
\paragraph{Quantitative Comparison with Prior Methods}
\Cref{tab:combined-results} and \Cref{tab:combined-results-sdxl} summarize the absolute scores and win rates of APO against all baselines. APO consistently achieves near-state-of-the-art performance on both SD1.5 and SDXL. On SD1.5, APO attains the best PickScore across all three evaluation sets; for example, on HPSv2, APO improves PickScore from 21.551 (Diffusion-DPO) to 22.014, while substantially increasing ImageReward from 0.3430 to 0.6887. The win-rate results further corroborate these gains: APO achieves an average win rate of 79.06\% against the unmodified SD1.5 model on HPSv2, and consistently outperforms reward-based baselines such as Diffusion-KTO, SDPO, and InPO across diverse benchmarks. These improvements remain stable across different prompt distributions (HPSv2, Parti-Prompts, and Pick-a-Pic), suggesting that the Jacobian-induced Riemannian metric generalizes beyond a specific prompt domain. Moreover, human evaluation results on SDXL show that APO consistently improves alignment with human preferences across all four criteria, achieving particularly strong gains in realism and aesthetic quality, where APO is preferred in \textbf{78\%} and \textbf{76\%} of comparisons, respectively.

\begin{figure}[t]
\centering
\includegraphics[width=\linewidth]{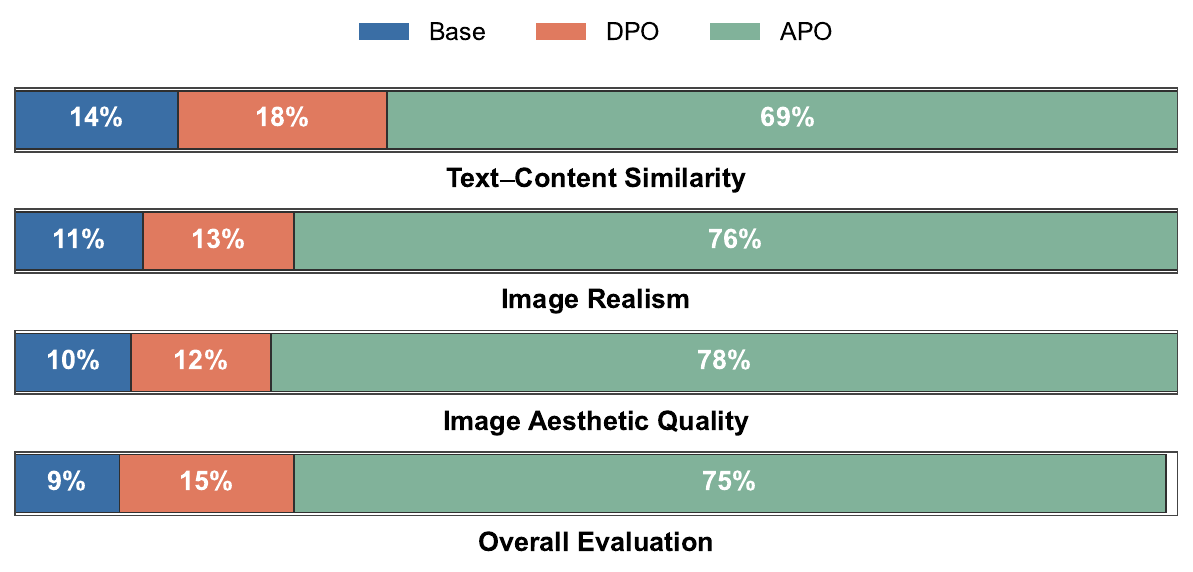}
\caption{\textbf{Human evaluation results.} We compare Base-SDXL, DPO-SDXL, and APO-SDXL across four dimensions: Text–to-Content Similarity, Image Realism, Aesthetic Quality, and Overall Preference. APO-SDXL outperforms both baselines in all categories.}
\label{fig:human_eval}
\end{figure}

\begin{figure}[t]
\centering
\includegraphics[width=\linewidth]{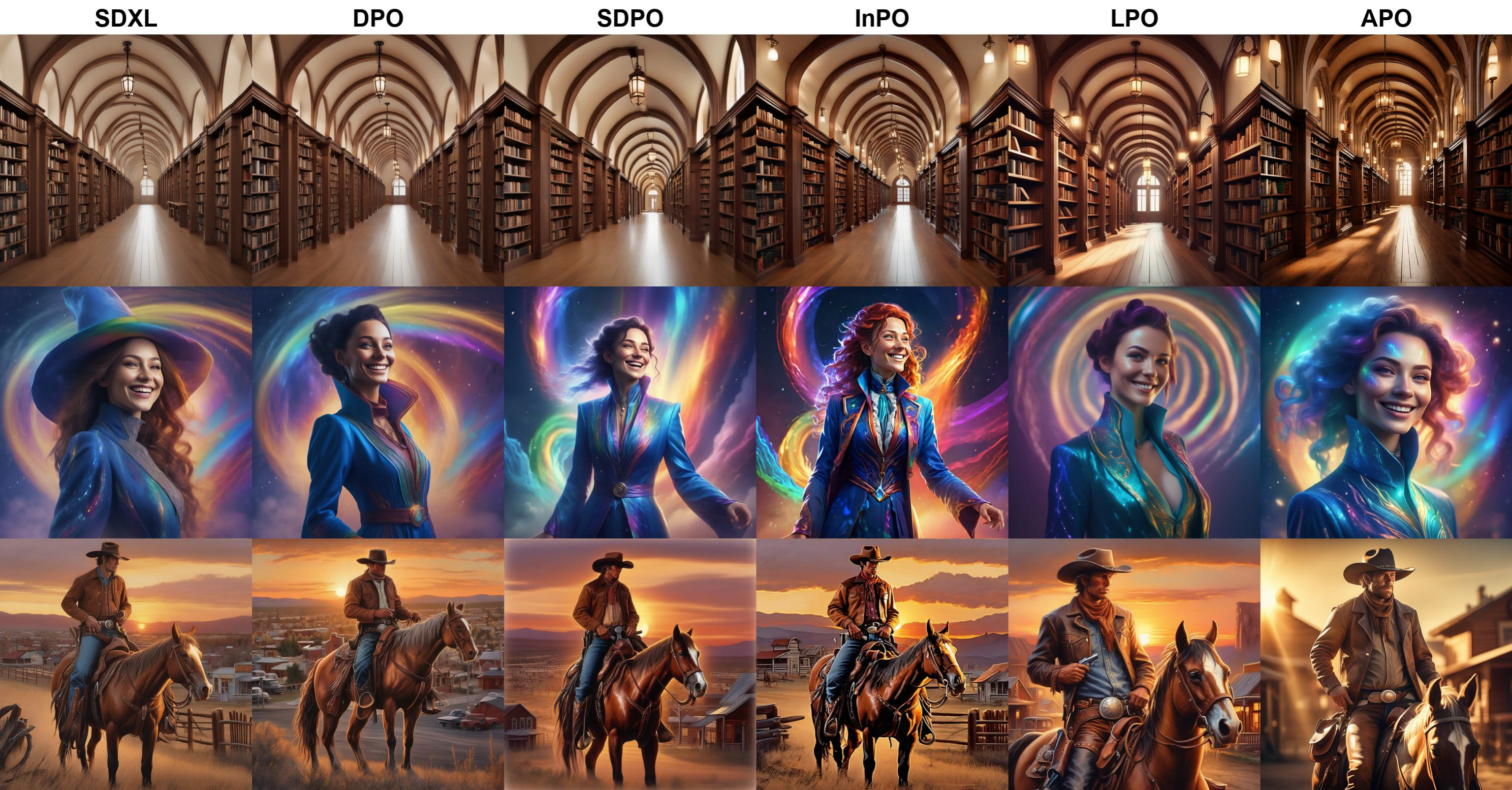}
\caption{\textbf{Qualitative comparison of various preference optimization methods based on SDXL.} Prompt: 1) \textit{A long library corridor with arched ceiling, symmetrical wooden bookshelves, warm pendant lighting, photorealistic.} 2) \textit{A smiling beautiful sorceress wearing a high necked blue suit surrounded by swirling rainbow aurora, hyper-realistic, cinematic, post-production.} 3) \textit{Cowboy on horseback, sunset small town background, medium close-up, detailed leather chaps and revolver holster.} }
\label{fig:example}
\end{figure}

\begin{table}[t]
\centering
\caption{\textbf{Training Efficiency on HPSv2.} We report Total Training Hours (h) and peak GPU memory (GB) on SD1.5 and SDXL. APO reduces the total training time to roughly half of standard Diffusion-DPO on both models, primarily due to faster convergence and fewer optimization steps. Despite introducing additional forward evaluations for Jacobian estimation, APO incurs only a modest increase in peak GPU memory of around 5\%. Detailed analysis is provided in \Cref{sec:main}.}
\label{tab:efficiency-results}
\footnotesize
\setlength{\tabcolsep}{4pt}
\begin{tabular}{lccccc}
\toprule
& \multicolumn{2}{c}{SD1.5} & & \multicolumn{2}{c}{SDXL} \\
\cmidrule{2-3} \cmidrule{5-6}
Method & Time Hours(h) $\downarrow$ & Peak Mem. (GB) $\downarrow$ & & Time Hours (h) $\downarrow$ & Peak Mem. (GB) $\downarrow$ \\
\midrule
Diffusion-DPO  & 205.6  & 14.6 & & 198.4  & 15.8 \\
\textbf{APO (Ours)}     & 112.7  & 15.4 & & 105.5  & 16.5 \\
\bottomrule
\end{tabular}
\end{table}

\begin{figure}[t]
\centering
\includegraphics[width=0.32\linewidth]{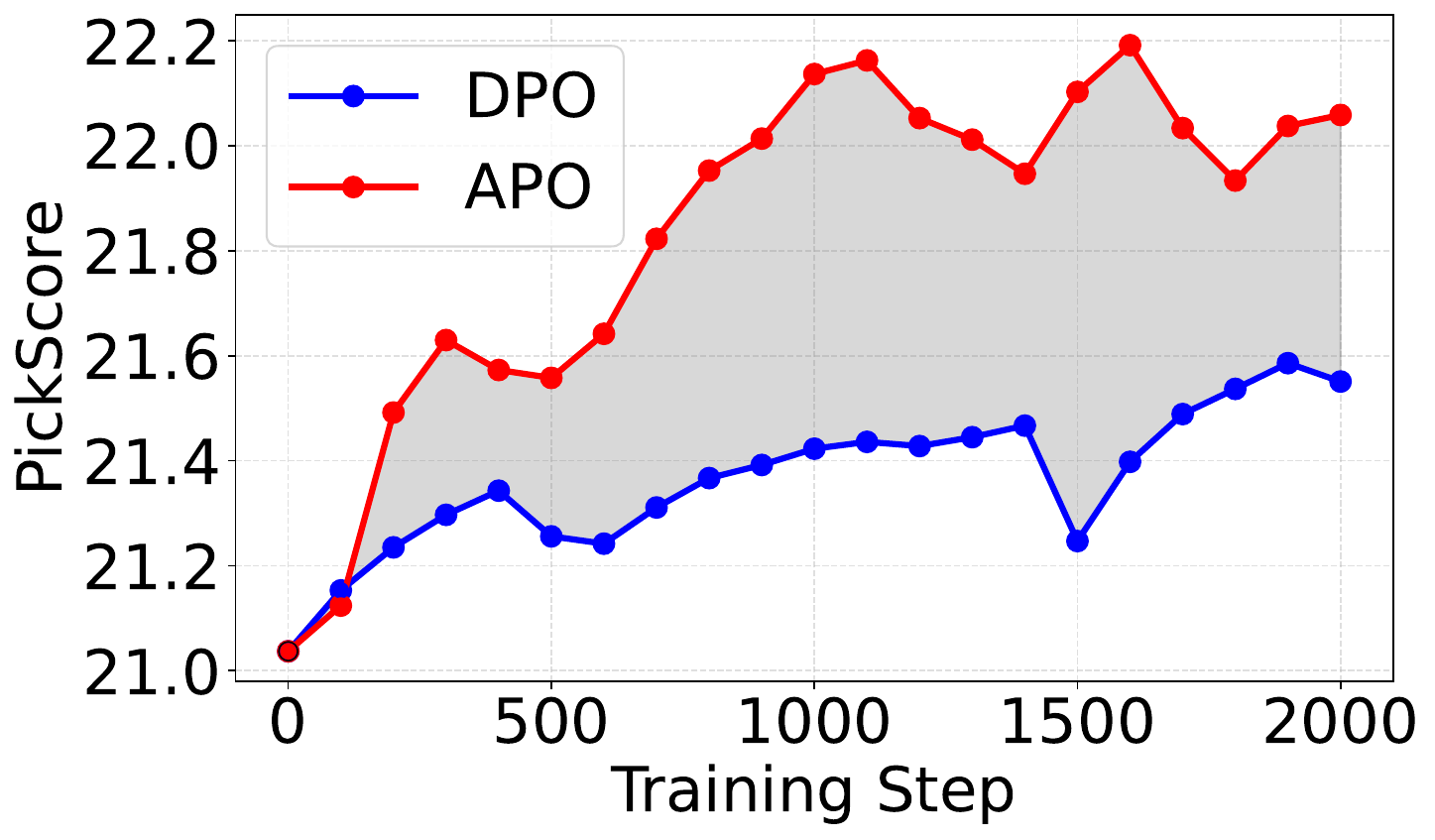}
\hfill
\includegraphics[width=0.32\linewidth]{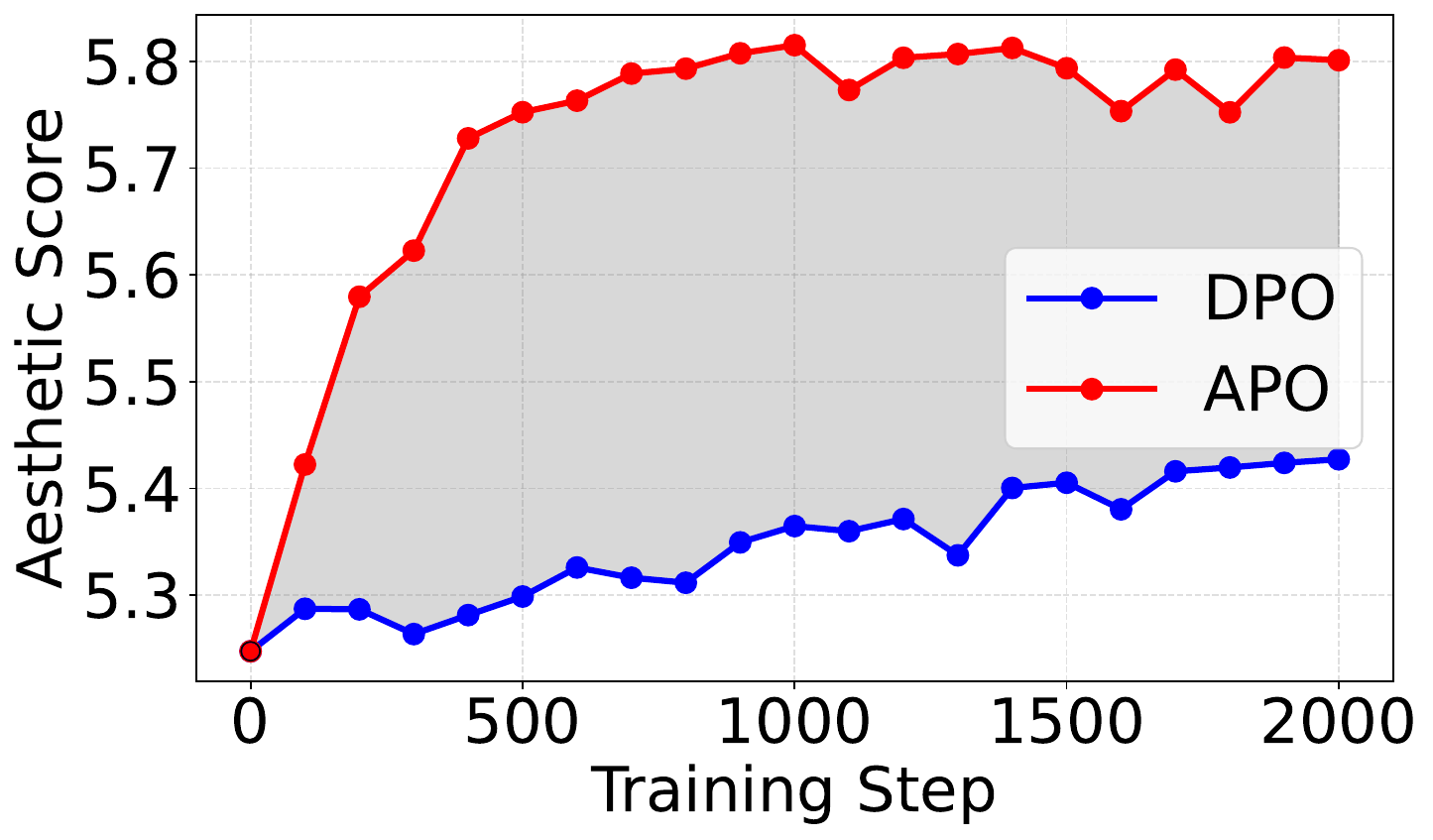}
\hfill
\includegraphics[width=0.32\linewidth]{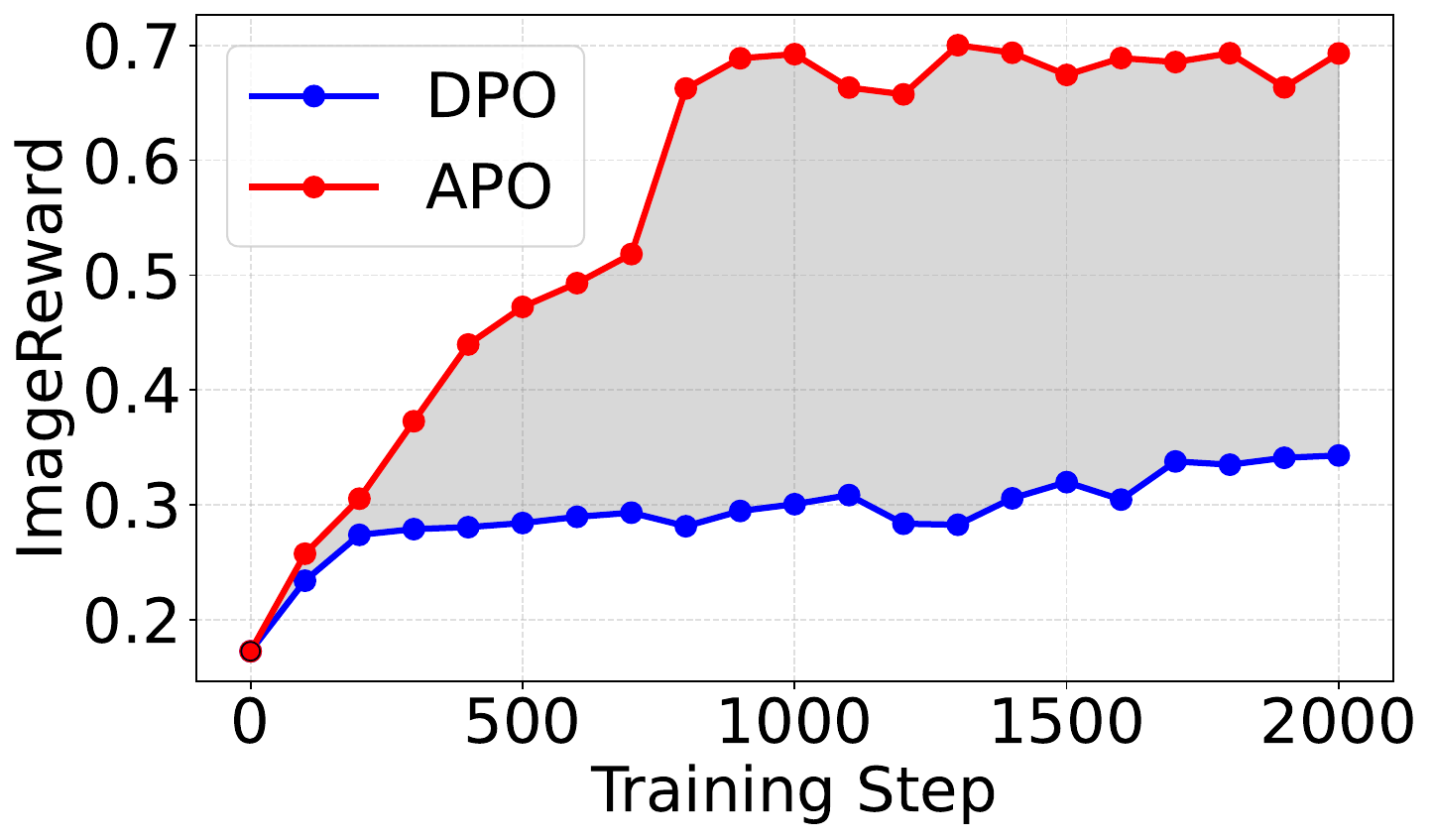}
\caption{\textbf{Training Dynamics of Reward Scores.} We plot the (a) PickScore, (b) Aesthetic Score, and (c) ImageReward of SD1.5 models throughout the training process. APO achieves competitive performance within significantly fewer training steps. }
\label{fig:dynamics}
\end{figure}

\begin{figure}[t]
\centering
\begin{subfigure}[b]{0.48\linewidth}
    \centering
    \includegraphics[width=\linewidth]{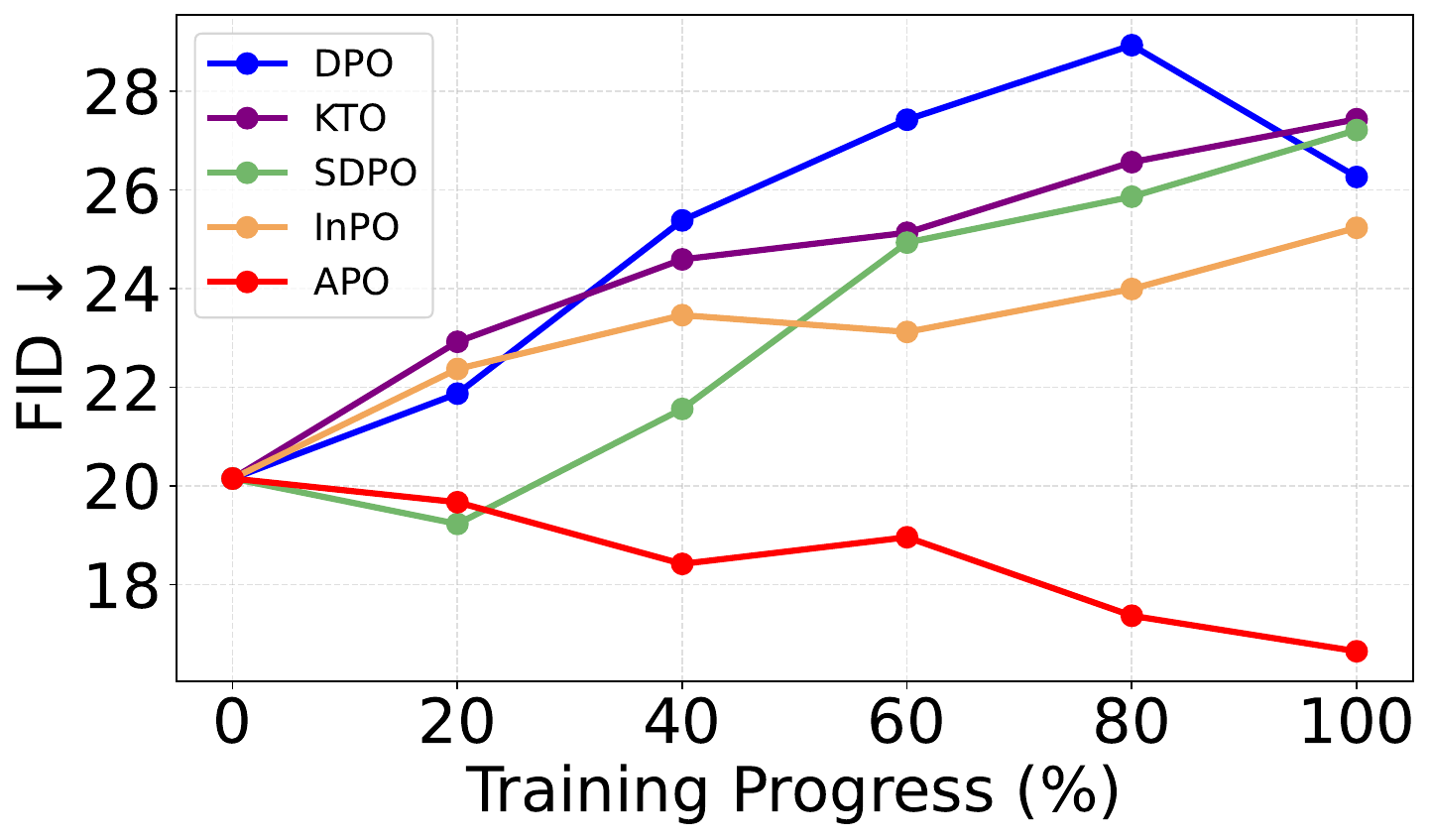}
    \caption{Distributional Stability During Alignment on SD1.5.}
    \label{fig:fid_a}
\end{subfigure}
\hfill
\begin{subfigure}[b]{0.48\linewidth}
    \centering
    \includegraphics[width=\linewidth]{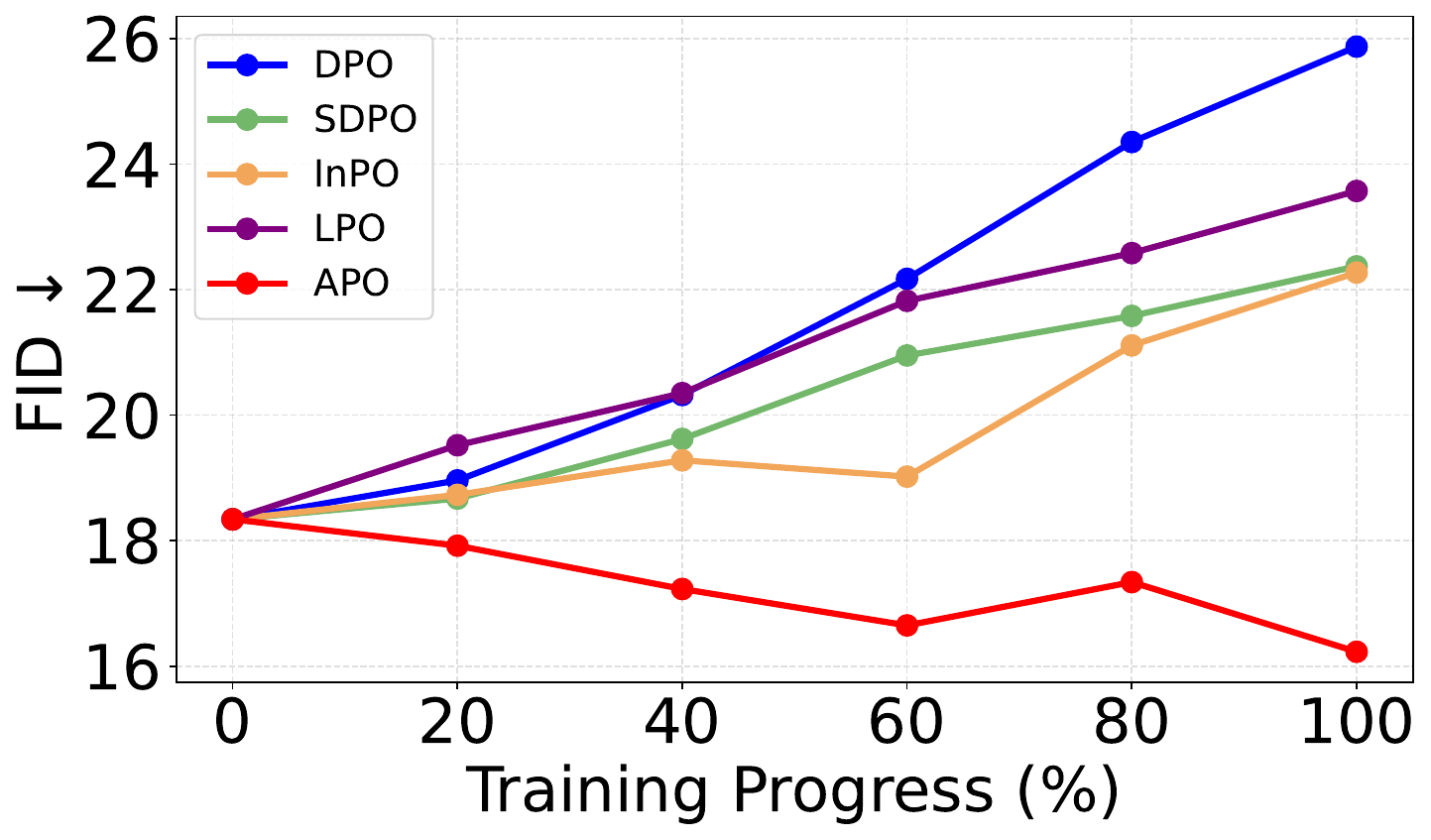}
    \caption{Distributional Stability During Alignment on SDXL.}
    \label{fig:fid_b}
\end{subfigure}
\caption{\textbf{FID-50K throughout preference alignment on COCO-2017.} Standard preference optimization methods exhibit an overall increase in FID as training progresses, whereas APO consistently maintains a lower FID and improves distributional fidelity throughout alignment. On SD1.5 and SDXL, APO reaches FID values of approximately 16.65 and 16.23, respectively, demonstrating improved distributional stability during preference optimization.}
\label{fig:fid}
\end{figure}

\begin{figure}[t]
\centering
\includegraphics[width=\linewidth]{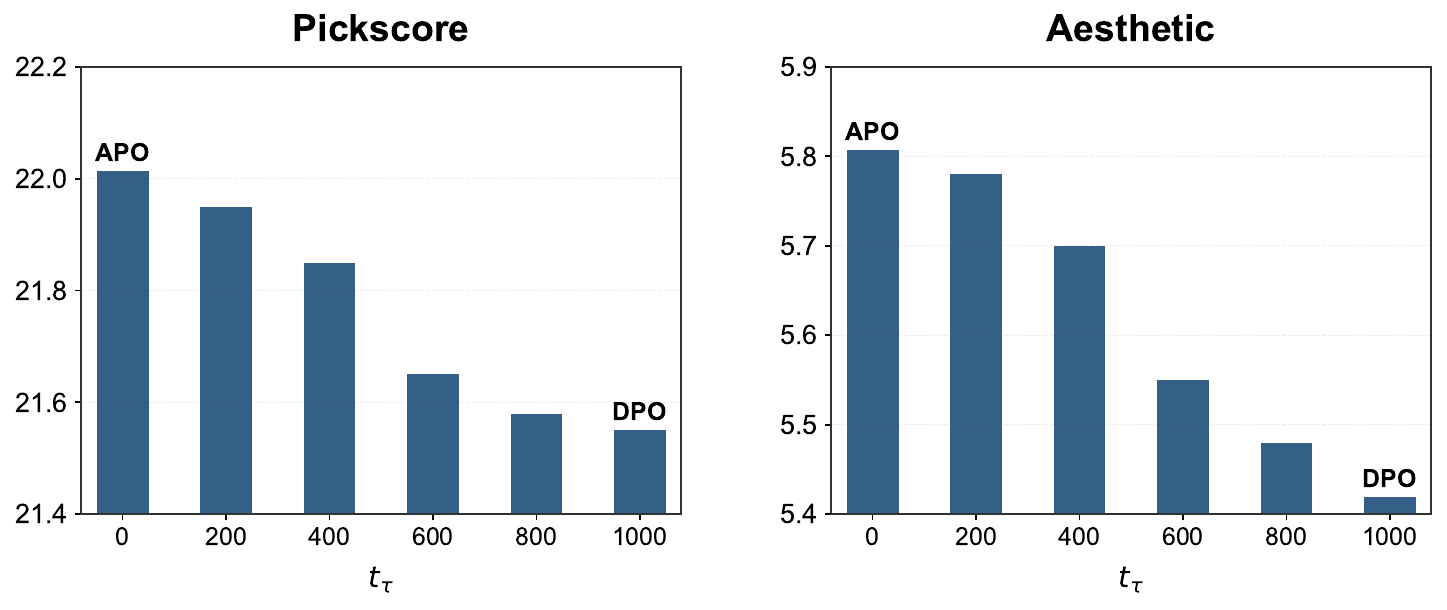}
\caption{\textbf{Ablation study on geometric information.} We compare DPO, APO, and time-threshold variants where APO is applied only for \( t \geq t_\tau \) with \( t_\tau \in \{200, 400, 600, 800\} \). Performance declines monotonically as \( t_\tau \) increases, with the sharpest drop between 400 and 600 steps, indicating that intermediate timesteps benefit most from anisotropic regularization.}
\label{fig:ablation1}
\end{figure}

\paragraph{Qualitative Results}
Figure ~\ref{fig:example} and Figure~\ref{fig:main} presents representative generations from SD1.5 and SDXL under different preference optimization methods. APO consistently produces images with higher visual fidelity, more natural object structures, and better fine-grained details than standard Diffusion-DPO and related baselines. The generated samples also exhibit more coherent global composition. These observations are consistent with our geometric interpretation: by suppressing unstable updates in sensitive directions, APO better preserves the underlying structure learned by the reference diffusion model during alignment.

\paragraph{Training Efficiency}
We report the training efficiency of DPO and APO on SD1.5 and SDXL, including Total Training Hours (h) and peak GPU memory (GB). As shown in \Cref{tab:efficiency-results}, APO reduces total training time to roughly half of standard Diffusion-DPO on both models. \textbf{This speedup mainly comes from faster convergence and fewer training steps. APO reaches stable performance within approximately 900 training steps, whereas standard Diffusion-DPO typically requires around 2000 steps under the same training setup}. This faster convergence is empirically supported by the training dynamics shown in \Cref{fig:dynamics}, where APO's scores ascend sharply within the first 1,000 steps and stabilize smoothly, while DPO still exhibits a gradual climb and requires nearly twice the steps to reach a comparable plateau. Although APO introduces an additional forward evaluation for Jacobian estimation, it stabilizes the alignment process by anchoring optimization in more reliable geometric directions. As a result, APO requires significantly fewer training steps, and benefits from weaker KL regularization without sacrificing stability. In terms of memory, APO only increases peak GPU usage by around \textbf{5\%} compared with DPO. The additional forward pass introduces negligible memory overhead, since no extra backpropagation graph is stored.

\paragraph{Distributional Stability During Alignment}
To further examine the effect of preference optimization on the overall generation distribution, we track the Fr\'echet Inception Distance (FID) throughout training on the COCO-2017 benchmark using the standard FID-50K protocol. Lower FID indicates closer agreement between the generated and reference image distributions. As shown in Figure~\ref{fig:fid}, standard preference optimization methods exhibit a clear tendency toward increasing FID as training progresses. On SD1.5, the FID of DPO, KTO, SDPO, and InPO generally increases throughout training, with DPO showing the largest degradation. In contrast, APO steadily reduces FID from around 20 to approximately \textbf{16.65}. A similar trend is observed on SDXL, where DPO, SDPO, InPO, and LPO all exhibit an overall increase in FID, while APO remains at a substantially lower level throughout training and reaches approximately \textbf{16.23} at the end of alignment.

\subsection{Ablation Studies and Analysis}
\label{sec:ablation}

\subsubsection{Study on Reward Metrics and Image Quality.}
We evaluate APO against three critical baselines on the HPSv2 benchmark: (a) standard DPO; (b) Our APO; (c) a time-threshold variant where APO is applied only on timesteps \( t \geq t_{\tau} \) and standard DPO is used for \( t < t_{\tau} \). We vary \( t_{\tau} \in \{200, 400, 600, 800\} \) and report performance on the HPSv2 benchmark.

As shown in \Cref{fig:ablation1}, the time-threshold results reveal a clear trend: performance decreases monotonically as \( t_{\tau} \) increases (i.e., as APO is restricted to higher noise levels). The most significant drop occurs between \( t_{\tau} = 400 \) and \( t_{\tau} = 600 \), suggesting that the intermediate noise range (approximately 400–600 steps) is where the anisotropic regularization provides the greatest benefit. The decline from \( t_{\tau} = 200 \) to \( t_{\tau} = 400 \) is moderate, while the gaps from \( t_{\tau} = 600 \) to \( 800 \) and from \( t_{\tau} = 800 \) to full DPO are notably smaller.

These observations are consistent with the theoretical analysis in \Cref{sec:prop-comp}. At low noise levels (small \( t \), near the clean image), the data manifold is well-defined and the Jacobian is maximally anisotropic; however, the generation process at this stage mainly refines fine-grained details (textures, fur, edges), which have limited impact on global preference scores. Hence, replacing APO with DPO in the low-noise region (i.e., setting \( t_{\tau} = 200 \)) causes only a modest degradation. In contrast, the intermediate noise range (roughly 400–600 steps) is commonly associated with the formation of semantic layout and coarse structures—where off-manifold drift appears to have the strongest impact on alignment quality. The sharp performance drop when APO is omitted from this region highlights its critical role. At high noise levels (large \( t \)), the Jacobian becomes nearly isotropic, and APO naturally degenerates to DPO; therefore, restricting APO to very high timesteps (\( t_{\tau} = 800 \)) yields diminishing performance gains. Nevertheless, we still recommend applying APO across all timesteps, because even the high-noise region offers a slight but consistent advantage over full DPO.

\subsubsection{Diversity Preservation During Alignment.}
\label{sec:diverity}
We investigate whether APO preserves generation diversity during preference optimization using perceptual and semantic diversity metrics tracked throughout training. \cref{fig:lpips_mpd} (left) reports LPIPS Mean Pairwise Distance (MPD)~\cite{LPIPS}, which measures perceptual diversity among samples generated from the same prompt. Higher values indicate stronger resistance to mode collapse. \cref{fig:dino_variance} (right) shows normalized DINO feature variance~\cite{DINO}, which captures semantic-level diversity.

APO consistently maintains higher diversity than standard DPO across both metrics throughout training, and the gap further widens as training progresses. Standard DPO applies uniform optimization pressure in all directions, causing the distribution to progressively contract toward narrow reward-seeking modes. In contrast, APO preserves greater flexibility along manifold tangent directions while suppressing unstable updates along geometry-sensitive directions. This anisotropic constraint helps maintain greater variation during alignment. 
\begin{figure}[t]
\centering

\begin{subfigure}[b]{0.48\linewidth}
    \centering
    \includegraphics[width=\linewidth]{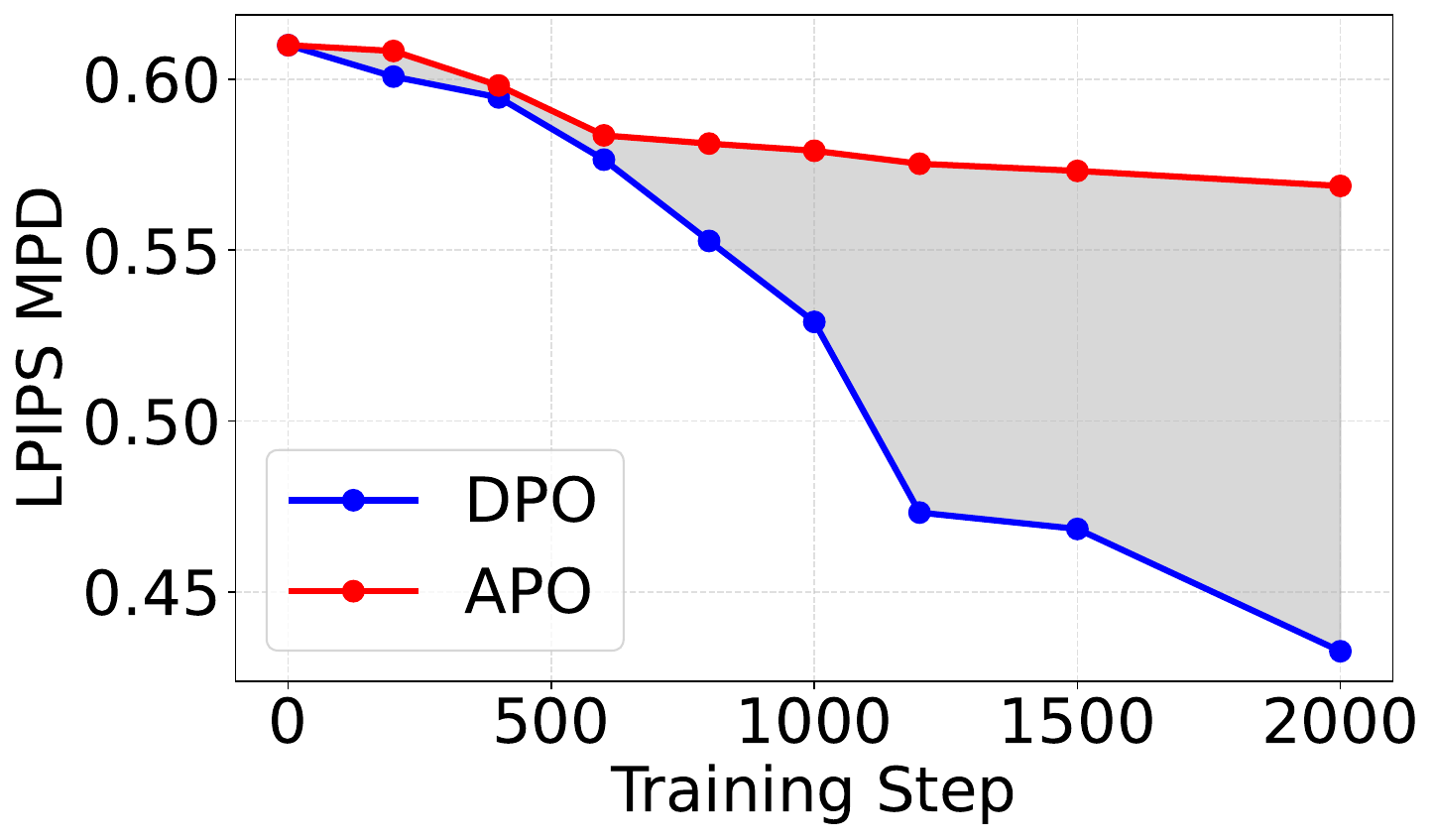}
    \caption{LPIPS Mean Pairwise Distance (MPD).}
    \label{fig:lpips_mpd}
\end{subfigure}
\hfill
\begin{subfigure}[b]{0.48\linewidth}
    \centering
    \includegraphics[width=\linewidth]{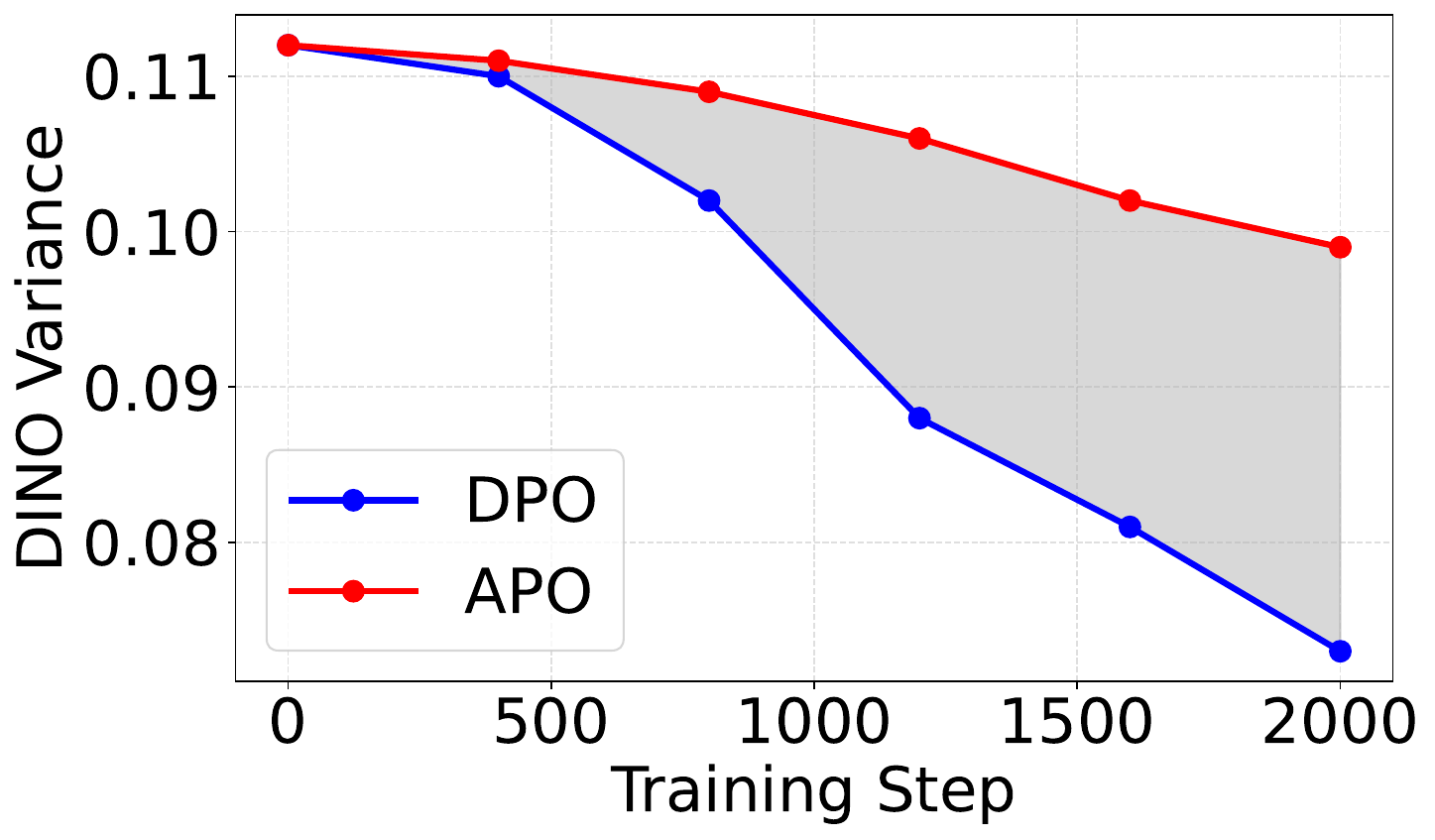}
    \caption{DINO feature variance.}
    \label{fig:dino_variance}
\end{subfigure}

\caption{\textbf{Diversity and semantic variation during training.} The left panel reports LPIPS Mean Pairwise Distance (MPD)~\cite{LPIPS}, while the right panel reports normalized DINO feature variance~\cite{DINO}. APO consistently maintains higher perceptual and semantic diversity than standard DPO throughout training, with the gap becoming more pronounced as training progresses. This suggests that its anisotropic regularization preserves greater flexibility along manifold-tangent directions while suppressing unstable updates along geometry-sensitive directions. Detailed analysis is provided in \Cref{sec:diverity}.}
\label{fig:diversity}
\end{figure}

\begin{table}[t]
\centering
\caption{\textbf{Hyperparameter sensitivity analysis of APO on the geometry strength $\lambda$, DPO temperature $\beta$, finite-difference perturbation radius $r$, and number of probe directions $n_v$.} The best results are \textbf{bolded}, second-best results are \underline{underlined}. The results indicate that APO is robust to moderate changes in its hyperparameters and that the default configuration provides a favorable trade-off between alignment performance and computational cost.}
\label{tab:ablation}

\footnotesize
\renewcommand{\arraystretch}{1.0}
\setlength{\tabcolsep}{4pt}

\begin{subtable}[t]{0.48\linewidth}
\centering
\caption{Ablation results on regularization coefficient $\lambda$}
\label{tab:ablation-lambda}
\begin{tabular}{lcccc}
\toprule
Setting & Aes. $\uparrow$ & Pick. $\uparrow$ & CLIP $\uparrow$ & IR. $\uparrow$ \\
\midrule

$\lambda=0.4$ 
& 5.7215 & 21.427 & 34.769 & 0.6195 \\

$\lambda=0.6$ 
& \underline{5.7821} & \underline{21.914} & \underline{35.353} & \underline{0.6453} \\

$\lambda=0.8$ 
& \textbf{5.8076} & \textbf{22.014} & \textbf{35.873} & \textbf{0.6887} \\

$\lambda=1.0$ 
& 5.7602 & 21.734 & 34.972 & 0.5711 \\

\bottomrule
\end{tabular}
\end{subtable}
\hfill
\begin{subtable}[t]{0.48\linewidth}
\centering
\caption{Ablation results on DPO temperature $\beta$}
\label{tab:ablation-beta}
\begin{tabular}{lcccc}
\toprule
Setting & Aes. $\uparrow$ & Pick. $\uparrow$ & CLIP $\uparrow$ & IR. $\uparrow$ \\
\midrule

$\beta=300$ 
& 5.7717 & 21.836 & \underline{35.509} & 0.6128 \\

$\beta=500$ 
& \textbf{5.8076} & \textbf{22.014} & \textbf{35.873} & \textbf{0.6887}  \\

$\beta=1000$ 
& \underline{5.7930} & \underline{21.875} & 35.412 & \underline{0.6695} \\

$\beta=1500$ 
& 5.7534 & 22.012 & 35.869 & 0.6358 \\

\bottomrule
\end{tabular}
\end{subtable}

\vspace{2pt}

\begin{subtable}[t]{0.48\linewidth}
\centering
\caption{Ablation results on perturbation radius $r$}
\label{tab:ablation-r}
\begin{tabular}{lcccc}
\toprule
Setting & Aes. $\uparrow$ & Pick. $\uparrow$ & CLIP $\uparrow$ & IR. $\uparrow$ \\
\midrule

$r=10^{-4}$ 
& 5.8013 & \underline{21.998} & \underline{35.861} & \underline{0.6871} \\

$r=10^{-3}$ 
& \textbf{5.8076} & \textbf{22.014} & \textbf{35.873} & \textbf{0.6887}  \\

$r=10^{-2}$ 
& \underline{5.8032} & 21.971 & 35.812 & 0.6830 \\

$r=10^{-1}$ 
& 5.7534 & 21.853 & 35.604 & 0.6612 \\

\bottomrule
\end{tabular}
\end{subtable}
\hfill
\begin{subtable}[t]{0.48\linewidth}
\centering
\caption{Ablation results on number of probe directions $n_v$}
\label{tab:ablation-nv}
\begin{tabular}{lcccc}
\toprule
Setting & Aes. $\uparrow$ & Pick. $\uparrow$ & CLIP $\uparrow$ & IR. $\uparrow$ \\
\midrule

$n_v=1$ 
& 5.8076 & 22.014 & 35.873 & 0.6887  \\

$n_v=2$ 
& 5.8083 & 22.016 & 35.877 & 0.6891 \\

$n_v=3$ 
& \underline{5.8092} & \underline{22.020} & \underline{35.881} & \underline{0.6893} \\

$n_v=4$ 
& \textbf{5.8104} & \textbf{22.027} & \textbf{35.884} & \textbf{0.6896} \\

\bottomrule
\end{tabular}
\end{subtable}

\end{table}

\subsubsection{Hyperparameter Sensitivity Analysis}
\label{sec:ablation_sensitivity}
\Cref{tab:ablation} reports the sensitivity of APO to its key hyperparameters: geometry strength $\lambda$, DPO regularization coefficient $\beta$, finite-difference perturbation radius $r$, and number of probe directions $n_v$.

\paragraph{Anisotropic regularization coefficient $\lambda$} Setting $\lambda = 0$ recovers standard Diffusion-DPO and yields the lowest scores across all metrics, confirming that the anisotropic correction is essential rather than merely additive regularization. Once the geometric term is added ($\lambda > 0$), performance improves immediately. We adopt $\lambda = 0.8$ as the default.

\paragraph{DPO temperature $\beta$} We observe that APO achieves its best performance with a smaller $\beta$ than standard Diffusion-DPO and related implicit-reward alignment methods. Empirically, introducing the anisotropic geometric term already reshapes the optimization landscape by reweighting prediction discrepancies according to the local geometry induced by the reference Jacobian. As a result, strong global preference scaling becomes less necessary for achieving effective alignment. This observation is also consistent with the improved training efficiency reported in \cref{sec:main}, where APO converges with substantially fewer optimization steps and faster alignment dynamics than standard Diffusion-DPO.

\paragraph{Perturbation radius $r$} APO is not sensitive to the numerical value of the perturbation radius; comparable performance is achieved for $r$ between $10^{-4}$ and $10^{-2}$, with an optimum at $r = 10^{-3}$. Larger values ($r = 10^{-1}$) cause the finite-difference approximation to exit the linear regime of $\epsilon_{\text{ref}}$, introducing bias. Smaller values ($r = 10^{-4}$) show degraded performance, possibly due to numerical instability.

\paragraph{Number of probe directions $n_v$} \label{sec:nv}A single probe direction ($n_v = 1$) already achieves strong results, and increasing $n_v$ yields only marginal gains. This is consistent with the analysis in \Cref{sec:prop-comp}: the diffusion trajectory itself already induces strong temporal averaging of the anisotropic direction estimates, making single-probe estimation sufficiently stable in practice.

\section{Conclusion and Limitation}
\paragraph{Conclusion} In this paper, we presented APO, a geometry-aware preference optimization framework for diffusion models. We revisit diffusion preference alignment from the perspective of manifold geometry, and identify off-manifold drift as a fundamental failure mode induced by isotropic optimization in high-dimensional denoising space. To address this issue, APO constructs a Jacobian-induced anisotropic metric from the reference score field, selectively suppressing geometry-sensitive directions while preserving flexible exploration along manifold-aligned directions. Our results suggest that incorporating intrinsic geometric structure into preference optimization provides an effective and principled direction for robust diffusion alignment.

\paragraph{Limitation} APO assumes that the reference score field provides a meaningful local approximation of the underlying data manifold. When the preference data distribution itself contains substantial off-manifold corruption or systematically noisy annotations, the induced geometry may become less reliable and potentially bias the alignment process. This may limit the effectiveness of APO in heavily distribution-shifted or weakly curated preference-learning scenarios. Extending APO toward more robust geometric estimation under noisy preference supervision remains an important direction for future work.

\clearpage

\begin{table}[t]
\centering
\caption{\textbf{Win Rate comparison on SDXL.} Results greater than 50\% indicate that APO is preferred over the corresponding baseline and are \textbf{bolded}. APO consistently achieves above-50\% win rates against most baselines, demonstrating its robust preference alignment performance across different benchmarks and evaluation criteria.}
\label{tab:combined-results-sdxl}
\footnotesize
\setlength{\tabcolsep}{10pt}
\begin{tabular}{@{\hspace{4pt}}l lccccc@{\hspace{4pt}}}
\toprule
\specialrule{0.05em}{0pt}{2pt}
& & \multicolumn{5}{c}{Win Rate (\%) vs. Baseline $\uparrow$} \\
\cmidrule{3-7}
Dataset & Method & Aes. & Pick. & CLIP & IR. & Avg. \\
\midrule
\multirow{7}{*}{HPSv2}
& vs. Base (SDXL)   & \textbf{89.28} & \textbf{90.12} & \textbf{58.12} & \textbf{87.13} & \textbf{81.16}\\
& vs. SFT           & \textbf{56.85} & \textbf{60.17} & \textbf{57.32} & \textbf{58.29} & \textbf{58.16}\\
& vs. Diffusion-DPO & \textbf{60.16} & \textbf{65.37} & \textbf{52.61} & \textbf{63.23} & \textbf{60.34}\\
& vs. SDPO          & \textbf{59.14} & \textbf{62.33} & \textbf{54.10} & \textbf{57.58} & \textbf{58.29}\\
& vs. InPO          & \textbf{54.06} & \textbf{59.37} & 47.80 & \textbf{56.02} & \textbf{54.66}\\
& vs. LPO           & \textbf{54.32} & 46.34 & \textbf{52.92} & \textbf{52.83} & \textbf{51.60}\\
\midrule
\multirow{7}{*}{Parti-Prompts}
& vs. Base (SDXL)   & \textbf{88.46} & \textbf{89.30} & \textbf{57.43} & \textbf{90.16} & \textbf{81.34}\\
& vs. SFT           & \textbf{57.24} & \textbf{58.56} & \textbf{53.01} & \textbf{60.24} & \textbf{57.26}\\
& vs. Diffusion-DPO & \textbf{62.39} & \textbf{57.04} & 46.60 & \textbf{59.30} & \textbf{56.33}\\
& vs. SDPO          & \textbf{55.28} & \textbf{56.37} & \textbf{52.93} & \textbf{54.31} & \textbf{54.72}\\
& vs. InPO          & \textbf{54.27} & \textbf{54.48} & \textbf{51.02} & \textbf{53.27} & \textbf{53.26}\\
& vs. LPO           & \textbf{53.46} & \textbf{50.42} & \textbf{54.35} & \textbf{52.15} & \textbf{52.60}\\
\midrule
\multirow{7}{*}{Pick-a-Pic}
& vs. Base (SDXL)   & \textbf{87.62} & \textbf{89.93} & \textbf{53.19} & \textbf{88.02} & \textbf{79.69}\\
& vs. SFT           & \textbf{60.13} & \textbf{69.27} & \textbf{60.14} & \textbf{58.30} & \textbf{61.96}\\
& vs. Diffusion-DPO & \textbf{64.44} & \textbf{61.07} & \textbf{52.38} & \textbf{56.34} & \textbf{58.56}\\
& vs. SDPO          & \textbf{61.37} & \textbf{54.36} & \textbf{54.57} & \textbf{55.61} & \textbf{56.48}\\
& vs. InPO          & \textbf{57.27} & \textbf{55.21} & 46.37 & \textbf{52.96} & \textbf{52.95}\\
& vs. LPO           & \textbf{54.32} & \textbf{54.26} & \textbf{54.22} & \textbf{51.85} & \textbf{53.66}\\
\bottomrule
\end{tabular}
\end{table}

\begin{figure*}[h]
  \centering
  \includegraphics[height=0.45\textheight, keepaspectratio]{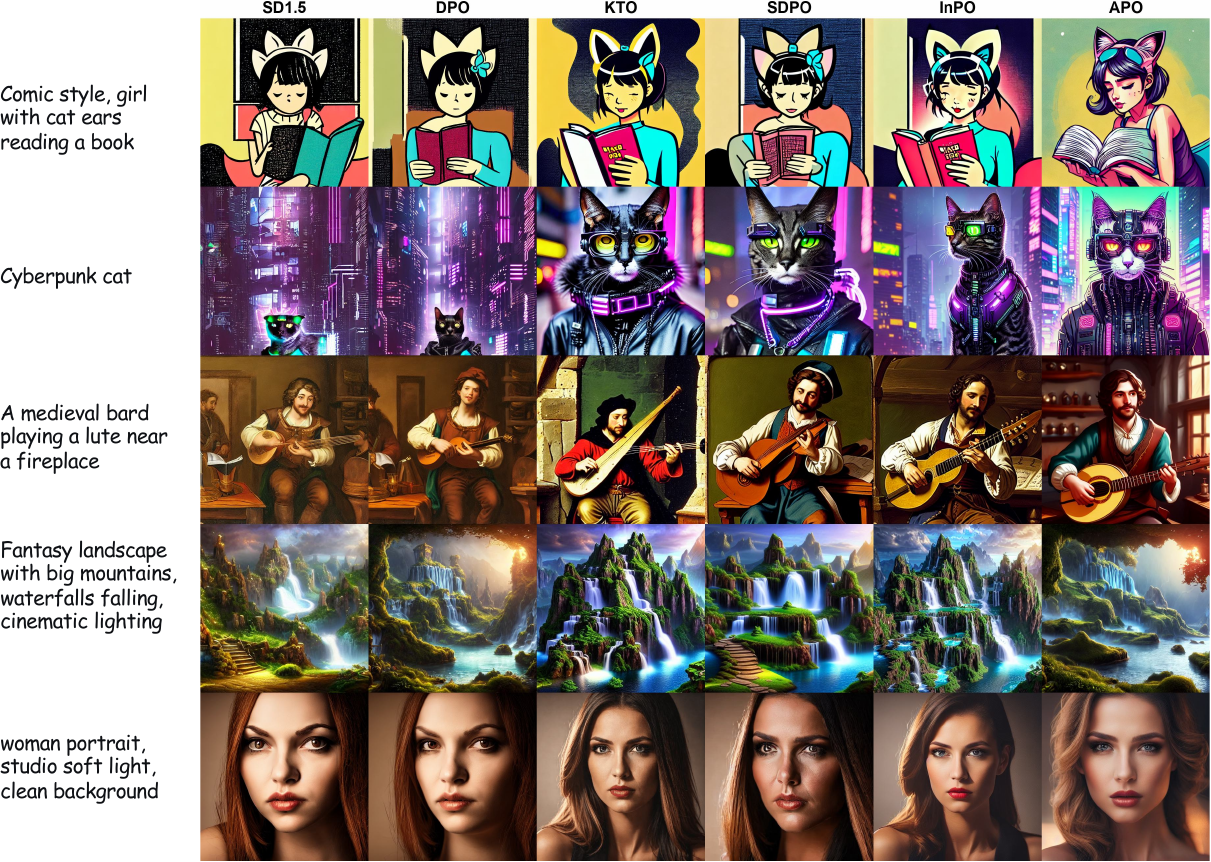}
  \hspace{2.0cm}
  \hspace*{-1.0cm} 
  \includegraphics[height=0.45\textheight, keepaspectratio]{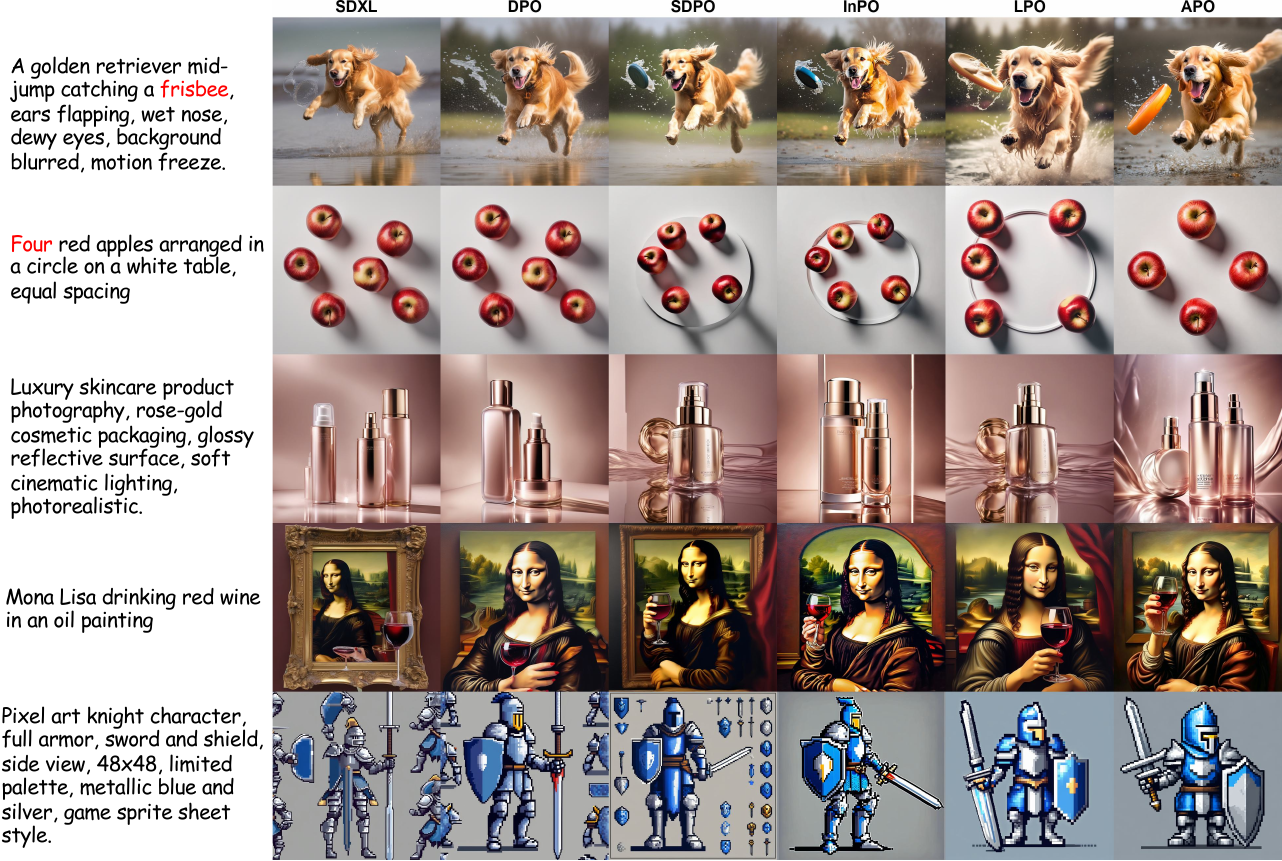}
  \caption{\textbf{Qualitative comparisons among baselines on SD1.5 and SDXL.} APO achieves higher prompt alignment, higher aesthetic quality, and richer details; most importantly, the realism of the images is greatly improved, indicating that the manifold constraint is taking effect.}
  \label{fig:main}
\end{figure*}

\clearpage
\begin{tcolorbox}[boxsep=1pt,top=2pt,bottom=2pt,left=4pt,right=4pt]
\textit{
\textbf{GenAI Usage Disclosure.}
During the preparation of this work, the authors used generative AI tools (including ChatGPT) solely for English polishing, grammar correction, and improving writing clarity. The scientific ideas, methodology, theoretical analysis, experiments, and conclusions were developed and verified entirely by the authors. The authors take full responsibility for the content of this paper.}
\end{tcolorbox}

\appendix

\bibliographystyle{elsarticle-num}
\bibliography{references}
\end{document}